\newcommand{\1}{\mathbbm{1}}
\newcommand{\cZ}{\mathcal{Z}}
\newcommand{\DIS}{\mathcal{D}}
\newcommand{\cD}{{\cal D}}
\newcommand{\cH}{{\cal H}}
\newcommand{\cX}{{\cal X}}
\newtheorem{theorem}{Theorem}
\newtheorem{lemma}{Lemma}
\newtheorem{definition}{Definition}
\newtheorem{corollary}{Corollary}
\newcommand{\beq}{\begin{equation}}
\newcommand{\eeq}{\end{equation}}
\newcommand{\bea}{\begin{array}}
\newcommand{\ena}{\end{array}}
\newcommand{\bds}{\begin {itemize}}
\newcommand{\eds}{\end {itemize}}
\newcommand{\bdf}{\begin{definition}}
\newcommand{\blm}{\begin{lemma}}
\newcommand{\edf}{\end{definition}}
\newcommand{\elm}{\end{lemma}}
\newcommand{\bthm}{\begin{theorem}}
\newcommand{\ethm}{\end{theorem}}
\newcommand{\bprp}{\begin{prop}}
\newcommand{\eprp}{\end{prop}}
\newcommand{\bcl}{\begin{claim}}
\newcommand{\ecl}{\end{claim}}
\newcommand{\bcr}{\begin{coro}}
\newcommand{\ecr}{\end{coro}}
\newcommand{\bquest}{\begin{question}}
\newcommand{\equest}{\end{question}}
\newcommand{\larrow}{{\larrow}}
\def\urltilda{\kern -.15em\lower .7ex\hbox{\~{}}\kern .04em}
\begin{document}\title{Disagreement-based Active Learning in \newline Online Settings}
\author{Boshuang Huang, Sudeep Salgia, Qing Zhao
\thanks{B.Huang, S. Salgia, Q. Zhao are with the School of Electrical and Computer Engineering, Cornell University, Ithaca, NY, 14850, USA. Emails: \{bh467,ss3827,qz16\}@cornell.edu.}\\
\thanks{This work was supported by the National Science Foundation under Grant CCF-1815559.}
}
\date{}
\maketitle
\begin{abstract}
\label{sec:abstract}
We study online active learning for classifying streaming instances within the framework of statistical learning theory. At each time, the learner 
either queries the label of the current instance or predicts the label based on past seen examples. The objective is to minimize the number of queries while constraining the number of prediction errors over a horizon of length $T$. We develop a disagreement-based online learning algorithm for a general hypothesis space and under the Tsybakov noise. We show that the proposed algorithm has a label complexity of $O(dT^{\frac{2-2\alpha}{2-\alpha}}\log^2 T)$ under a constraint of bounded regret in terms of classification errors, where $d$ is the VC dimension of the hypothesis space and $\alpha$ is the Tsybakov noise parameter. We further establish a matching (up to a poly-logarithmic factor) lower bound, demonstrating the order optimality of the proposed algorithm. We address the tradeoff between label complexity and regret and show that the algorithm can be modified to operate at a different point on the tradeoff curve. 
%
\end{abstract}
%
%
\begin{IEEEkeywords}
Active learning, Online learning, Statistical learning theory, Label complexity, Regret.
\end{IEEEkeywords}

\section{Introduction}
\label{sec:intro}

We consider online classification of streaming instances within the framework of statistical learning theory. Let $\{X_t\}_{t\ge 1}$ be a sequence of instances drawn independently at random from an unknown underlying distribution $\mathbb P_X$ over an instance space $\mathcal X$. Each instance $X_t$ has a hidden binary label $Y_t\in \{0, 1\}$ that relates probabilistically to the instance  according to an unknown conditional distribution $\mathbb P_{Y|X}$. The learner is characterized by its hypothesis space $\mathcal H$ consisting of all classifiers under consideration. At each time $t$, the learner decides whether to query the label of the current instance $X_t$. If yes, $Y_t$ is revealed. Otherwise, the learner predicts the label of $X_t$ using a hypothesis in $\mathcal H$ and incurs a classification error if the predicted label does not equal to the true label $Y_t$. The objective is to minimize the expected number of queries over a horizon of length $T$ while constraining the total number of classification errors. The tension between label complexity and classification error rate needs to be carefully balanced through a sequential strategy governing the query and labeling decisions at each time.

The above problem arises in applications such as spam detection and event detection in real-time surveillance.  The key characteristics of these applications are the high-volume streaming of instances and the complex and nuanced definition of labels. While the latter necessitates human intervention to provide annotations for selected instances, such human annotations, time consuming and expensive to obtain, should be sought after sparingly to ensure scalability. 

\subsection{Previous Work on Active Learning}

The above problem falls under the general framework of active learning. In contrast to passive learning where labeled examples are given \emph{a priori} or drawn at random, active learning asserts control over which labeled examples to learn from by actively querying the labels for carefully selected instances. The hope is that by learning from the most informative examples, the same level of classification accuracy can be achieved with much fewer labels than in passive learning.

\paragraph{Offline Active Learning} 

Active learning has been studied extensively under the Probably Approximately Correct (PAC) model, where the objective is to output an $\epsilon$-optimal classifier with probability $1-\delta$ using as few labels as possible. The PAC model pertains to offline learning since the decision maker does not need to self label any instances during the learning process. An equivalent view is that classification errors that might have incurred during the learning process are inconsequential, and the tension between label complexity and classification errors is absent. If measured purely by label complexity, the decision maker has the luxury of skipping, at no cost, as many instances as needed to wait for the most informative instance to emerge.

A much celebrated  active learning algorithm was given by Cohn, Atlas, and Ladner~\cite{cohn1994improving}. Named after its inventors, the CAL algorithm  is applicable to a general hypothesis space $\mathcal H$. It, however, relies on the strong assumption of realizability, i.e., the instances are perfectly separable and there exists an error-free classifier in $\mathcal H$. In this case, hypotheses inconsistent with a single label can be safely eliminated from further consideration. Based on this key fact, CAL operates by maintaining two sets at each time: the \emph{version space} consisting of all surviving hypotheses (i.e., those that are consistent with all past labels), and the \emph{region of disagreement} (RoD), a subset of $\mathcal X$ for which there is disagreement among hypotheses in the current version space regarding their labels. CAL queries labels if and only if the instance falls inside the current RoD. Each queried label reduces the version space, which in turn may shrink the RoD, and the algorithm iterates indefinitely. Note that instances outside the RoD are given the same label by all the hypotheses in the current version space. It is thus easy to see that CAL represents a conservative approach: it only disregards instances whose labels can already be perfectly inferred from past labels. Quite surprisingly, by merely avoiding querying labels that carry no additional information, exponential reduction in label complexity can be achieved in a broad class of problems. (See, for example, an excellent survey by Dasgupta~\cite{dasgupta2011two} and a monograph by Hanneke~\cite{hanneke2014theory}).

The CAL algorithm was extended to the agnostic setting by Balcan, Beygelzimer, and Langford~\cite{balcan2006agnostic}. In the agnostic setting, instances are not separable, and even the best classifier $h^*$ in $\mathcal H$ experiences a non-zero error rate. The main challenge in extending CAL to the agnostic case is the update of the version space: a single inconsistent label can no longer disqualify a hypothesis, and the algorithm needs to balance the desire of quickly shrinking the version space with the irreversible risk of eliminating $h^*$. Referred to as $A^2$ (Agnostic Active), the algorithm developed by Balcan, Beygelzimer, and Langford explicitly maintains an $\epsilon$ neighborhood of $h^*$ in the version space by examining the empirical errors of each hypothesis. Analysis of the $A^2$ algorithm can be found in~\cite{hanneke2007bound,hanneke2009adaptive,koltchinskii2010rademacher,hanneke2011rates}. Variants of the $A^2$ algorithm include \cite{beygelzimer2008importance,beygelzimer2010agnostic,beygelzimer2011efficient,dasgupta2008general, hanneke2012surrogate}.  In particular, the DHM algorithm (named after the authors) in~\cite{dasgupta2008general} simplifies the maintenance of the RoD through a reduction to supervised learning.

The above conservative approach originated from the CAL algorithm is referred to as the disagreement-based approach. The design methodology of this conservative approach focuses on avoiding querying labels that provide no or little additional information. More aggressive approaches that actively seeking out more informative labels to query have been considered in the literature. One such approach is the so-called margin-based. It is specialized for learning  homogeneous (i.e. through the origin) linear separators of  instances on the unit sphere in $\mathbb R^d$ and adopts  a specific noise model that assumes linearity in terms of the inner product with the Bayes optimal classifier. In this case, the informativeness of a potential label can be measured by how close the instance is to the current decision boundary. Representative work on the margin-based approach includes~\cite{dasgupta2005analysis,balcan2007margin,balcan2013active,awasthi2014power,awasthi2015efficient,zhang2018efficient,cortes2019region}. 

Besides the stream-based model where instances arrive one at a time, active learning has also been considered under the synthesized instances and the pool-based sampling models~\cite{settles2012active} and synthesizes instances for various models for applications (see for example,~\cite{Haupt2011, Hero2013, Lipor2017}. These models are less relevant to the online setting considered in this work.

\paragraph{Online Active Learning}

Active learning in the online setting has received much less attention. The work of \cite{cesa2003learning} and \cite{cavallanti2009linear} extended the margin-based approach to the online setting, focusing, as in the offline case, on homogeneous linear separators for instances on the unit sphere in $\mathbb R^d$. A specific noise model was adopted, which assumes that the underlying conditional distribution of the labels is fully determined by the Bayes optimal classifier $h^*$. In this work, we consider a general instance space and arbitrary classifiers. Tackling the general setting, the proposed algorithm and the analysis are fundamentally different from these two existing studies. Furthermore, we show in simulation examples that, even when restricted to the special case of homogeneous linear separators, the algorithm proposed in this work outperforms the margin-based algorithm developed in~\cite{cesa2003learning,cavallanti2009linear}.

The only work we are aware of that extends the disagreement-based approach to the online setting is~\cite{yang}, which extends the offline DHM algorithm to a stream-based setting.  In Sec.\ref{sec:1.2}, we discuss in detail the difference between~\cite{yang} and this work.   

In this work, we choose to adopt the disagreement-based design methodology. While approaches that more aggressively seek out informative labels may have an advantage in the offline setting when the learner can skip unlabeled instances at no cost and with no undesired consequences, such approaches may be less suitable in the online setting. The reason is that in the online setting, self labeling is required in the event of no query, classification errors need to be strictly constrained, and no feedback to the predicted labels is available (thus learning has to rely solely on queried labels).  These new challenges in the online setting are perhaps better addressed by the more conservative disagreement-based design principle that skips instances more cautiously. Simulation results in Sec.~\ref{sec:simulation} on the comparison with the margin-based algorithms corroborate this assessment.

\subsection{Main Results}
\label{sec:1.2}

We consider a general instance space $\mathcal X$, a general hypothesis space $\mathcal H$ of Vapnik-Chervonenkis (VC) dimension $d$ , and the Tsybakov noise model parameterized by $\alpha\in(0,1]$~\cite{tsybakov2004optimal}. We develop an online active learning algorithm and establish its $O(dT^{\frac{2-2\alpha}{2-\alpha}}\log^2 T)$  label complexity and uniformly bounded regret in prediction errors  with respect to the best classifier $h^*$ in $\mathcal H$. More specifically, the total expected classification errors in excess to $h^*$ over a horizon of length $T$ is bounded below $1/2$ independent of $T$, demonstrating that the proposed algorithm offers practically the same level of classification accuracy as $h^*$ with a sublinear label complexity in $T$. We further establish a matching (up to a poly-logarithmic factor) lower bound, demonstrating the order optimality of the proposed algorithm. We address the tradeoff between label complexity and regret and show that the algorithm can be modified to operate at a different point on the tradeoff curve. Below we contextualize this work with respect to the existing literature by highlighting the differences in three aspects: algorithm design, analysis techniques, and performance comparison.

\paragraph{Algorithm Design}
Referred to as OLA (OnLine Active), the algorithm developed in this work is rooted in the design principle of the disagreement-based approach.  The defining characteristic of the disagreement-based approach is to avoid querying instances that see insufficient disagreement among surviving hypotheses by maintaining, explicitly or inexplicitly, the RoD. Specific algorithm design differs in its temporal structure of when to update the RoD and, more crucially, in the threshold design on what constitutes sufficient disagreement. As detailed below, OLA differs from representative disagreement-based algorithms---the offline $A^2$~\cite{balcan2006agnostic} and DHM~\cite{dasgupta2008general} algorithms and the online ACAL algorithm~\cite{yang}---in both aspects.

In terms of temporal structure, OLA operates in epochs and updates the RoD at the end of each epoch, where an epoch ends when a fixed number $M$ of labels have been queried. This structure is different from $A^2$,  DHM, and ACAL. In particular, the epochs in $A^2$ are determined by the time instants when the size of the current RoD shrinks by half due to newly obtained labels. Such an epoch structure, however, requires the knowledge of the marginal distribution $\mathbb P_X$ of the instances for evaluating the size of the RoD. The epoch structure of OLA obviates the need for this prior knowledge. DHM, on the other hand, does not operate in epochs and updates (inexplicitly) the RoD at each time. Similarly, ACAL also updates the RoD at each time\footnote{ACAL has a predetermined epoch structure with geometrically growing epoch length. This epoch structure, however, is not for controlling when to update the RoD, but rather for setting a diminishing sequence of outage probability of eliminating $h^*$. The algorithm otherwise restarts by forgetting all past experiences at the beginning of each epoch.}. Moreover, the updates involve calculating thresholds by solving multiple non-convex optimization problems with randomized nonlinear constraints that can only be checked numerically. In contrast, the epoch-based updates in OLA only involve thresholds that are given in closed-form in terms of empirical errors.  

A more crucial improvement in OLA is the design of the threshold that determines the RoD. This is the key algorithm parameter that directly controls the tradeoff between label complexity and classification error rate.  By focusing only on empirical errors incurred over significant $(X,Y)$ examples determined by the current RoD, we obtain a tighter concentration inequality and a more aggressive threshold design, which leads to significant reduction in label complexity as compared with $A^2$, DHM, and ACAL, as well as margin-based algorithms (see details on the performance comparison below).


\paragraph{Analysis Techniques}
Under the offline PAC setting, the label complexity of an algorithm is often analyzed in terms of the suboptimality gap $\epsilon$ and the outage probability $\delta$. Under the online setting, however, the label complexity of an algorithm is measured in terms of the horizon length $T$, which counts both labeled and unlabeled instances. In the analysis of the label complexity of $A^2$~\cite{balcan2006agnostic,hanneke2007bound},  unlabeled instances are assumed to be cost free, and bounds on the number of unlabeled instances skipped by the algorithm are missing and likely intractable. Without a bound on the unlabeled data usage, the offline label complexity in terms of $(\epsilon,\delta)$ cannot be translated to its online counterpart. 

Yang~\cite{yang} analyze the label complexity by bounding the excess risk in terms of local Rademacher complexity~\cite{koltchinskii2006local} within each epoch. This technique is restricted to the specific threshold design in ACAL, which is based on expensive  non-convex optimization with constraints on randomized Rademacher process.

We adopt new techniques in analyzing the online label complexity of OLA. First we separate the analysis into two stages based on the size of the RoD.  For the early stage where the  RoD is large, we show that RoD is decreasing exponentially. Then, to upper bound the label complexity,  the key idea is to construct a supermartingale $\{S(t)\}_{t\ge 0}$ given by the difference of an exponential function of the total queried labels up to $t$ and a linear function of $t$. The optimal stopping theorem for supermartingales then leads to an upper bound on the exponential function of the label complexity. A bound on the label complexity thus follows from Jensen's inequality. The remaining label complexity where the RoD is small can be bounded by the product of the size and the remaining time horizon. The separation of the two stages is then optimized to tighten the bound. 

The lower bound established in this work is new. We are not aware of any existing lower bound on label complexity in the online setting. Lower bounds for the offline PAC setting (see, e.g.,~\cite{castro2008minimax,hanneke2015minimax}) are inapplicable to the online setting and were established using different techniques. 


\paragraph{Performance Comparison}
We now comment on the performance comparison in terms of both asymptotic orders and finite-time performance. 

As stated above, the performance analysis of $A^2$ is in terms of the PAC parameters $(\epsilon, \delta)$. The analysis of its online performance is missing. Dasgupta, et. al provided an upper bound on the unlabeled data usage in DHM~\cite{dasgupta2008general}. The bound, however, appears to be loose and translates to a linear $O(T)$ label complexity in the online setting. Yang~\cite{yang} provided an upper bound $O(dT^{\frac{2-2\alpha}{2-\alpha}}\log^3 T)$ on the label complexity of ACAL, which is higher than the $O(dT^{\frac{2-2\alpha}{2-\alpha}}\log^2 T)$ order offered by OLA.

The margin-based algorithm for learning homogeneous linear separators under a uniform distribution of $X$ on the unit sphere is analyzed in~\cite{cavallanti2009linear} under the Tsybakov noise condition. It leads to a regret order of $O(dT^{\frac{2-2\alpha}{3-2\alpha}}\log T)$ and a label complexity of $O(dT^{\frac{2-2\alpha}{2-\alpha}}\log T)$ under the Tsybakov low noise condition. These orders cannot be directly compared with that of OLA due to the restrictions to homogeneous linear separators and the specific form of $\mathbb P_{Y|X}$. This margin-based algorithm also operates at a different point on the tradeoff curve between regret and label complexity, offering a slightly lower order in label complexity but a higher order in regret.  However, even when restricted to the special case targeted by this margin-based algorithm, the dominating polynomial term is the same, and the finite-time comparison given by simulation examples in Sec.~\ref{sec:simulation} actually show superior performance of OLA in both label complexity and regret. 

The finite-time comparison in  Sec.~\ref{sec:simulation} also demonstrate significant performance gain offered by OLA over the three representative disagreement-based algorithms: $A^2$, DHM, and ACAL. In particular, the improvement over the online algorithm ACAL is drastic.

\section{Problem Formulation}
\label{sec:pre}

\subsection{Instances and Hypotheses}

Let $\{X_t\}_{t\ge 1}$ be a streaming sequence of instances, each drawn from an instance/sample space $\mathcal X$ and characterized by its feature vector.
Each subset of $\mathcal X$ is a concept. There is a target concept $\mathcal C\subset \mathcal X$ that the learner aims to learn (e.g., learning
the concept ``table'' from household objects).
Relating to the target concept $\mathcal C$, each instance $X_t$ has a hidden label $Y_t$, indicating whether $X_t\in \mathcal C$
(i.e., a positive example wherein $Y_t = 1$) or $X_t\notin \mathcal C$ (a negative example with $Y_t = 0$). The label $Y_t$
relates probabilistically to $X_t$ according to an unknown conditional distribution $\mathbb P_{Y |X}$.

The learner is characterized by its hypothesis space $\mathcal H$ consisting of all classifiers under consideration.
Each hypothesis $h\in \mathcal H$ is a measurable function mapping from $\mathcal X$ to $\{0, 1\}$. The
complexity of the hypothesis space $\mathcal H$ is measured by its VC dimension $d$.

\subsection{Error Rate, Disagreement, and Bayes Optimizer}

Recall that $\mathbb P_{Y|X}$ denotes the conditional distribution of the true label $Y$ for a given $X$. Let  $\mathbb P_X$ denote the unknown marginal distribution of instances $X$ and  $\mathbb P=\mathbb P_X \times \mathbb P_{Y|X}$ the joint distribution of an example~$(X,Y)$.
The error rate of a hypothesis $h$ is given by
\begin{equation}
\epsilon_\mathbb P (h) = \mathbb P [ h(X) \neq Y ],
\end{equation}
which is the probability that $h$ misclassifies a random instance.
Define the \emph{pseudo-distance} and the \emph{disagreement} between two hypotheses as, respectively,
\begin{equation}
\label{eq:rho}
d(h,h') = | \epsilon_{\mathbb P}(h) -   \epsilon_{\mathbb P}(h')| , ~~
\rho(h,h') = \mathbb P_X [h(X)\neq h'(X)],
\end{equation}
where the distance is the difference in error rates and the disagreement is the probability mass of the instances over which the two hypotheses disagree. Lastly, $\DIS(h_1, h_2) = \{x \in \mathcal{X}: h_1(x) \neq h_2(x)\}$ denotes the disagreement region between two hypotheses $h_1$ and $h_2$.

Let $h^*$ be the Bayes optimal classifier that minimizes the error rate, i.e., for all $x\in\mathcal X$, $h^*(x)$ is the label that minimizes the probability of classification error:
 \begin{equation}
 h^*(x)=\arg\min_{y=0,1}\mathbb E_{\mathbb P_{Y|X=x}}\mathbbm 1[Y\neq y],
 \end{equation}
where $\mathbbm 1[\cdot]$ is the indicator function. Let
\begin{equation}
 \label{eq:eta}
\eta(x)=\mathbb P_{Y|X=x}(Y=1|X=x).
\end{equation}
It is easy to see that
\begin{equation}
h^*(x)=\begin{cases}
1 \; \mbox{ if }  \eta(x)\ge \frac 12  \\
0 \; \mbox{ if }  \eta(x)<\frac 12
\end{cases}.
\end{equation}
We assume that $h^*\in \mathcal H$. 

\subsection{Noise Condition}

The function $\eta(x)$ given in \eqref{eq:eta} is a measure of the feature noise level at $x$.
The noise-free case is when labels are deterministic:  $\mathbb P_{Y|X=x}$, hence $\eta(x)$, assumes only values of $0$ and $1$. In this case, the optimal classifier $h^*$ is error-free. This is referred to as the realizable case with perfectly separable data.

In a general agnostic case with arbitrary $\mathbb P_{Y|X}$, consistent classifiers may not exist, and even $h^*$ suffers a positive error rate. A particular case, referred to as the Massart bounded noise condition~\cite{massart2006risk}, is when $\eta(x)$ is discontinuous at the boundary between positive examples $\mathcal X^*_1\triangleq \{x\in\mathcal X: h^*(x)=1\}$ and negative examples $\mathcal X^*_0\triangleq \{x\in\mathcal X: h^*(x)=0\}$. Specifically, there exists $\gamma>0$ such that $| \eta(x)-\frac 12| \ge \gamma$ for all $x\in\mathcal X$.

A more general noise model is the Tsybakov noise condition~\cite{tsybakov2004optimal}, for which the Massart bounded noise condition is a special case. It allows $\eta(x)$ to pass $\frac{1}{2}$ with a continuous change across the decision boundary and parameterizes the slope around the boundary. Specifically, the Tsybakov noise condition states that there exist $\alpha\in(0,1]$, $c_0\ge 0$, such that
for all $h$, we have
\begin{equation}
  \rho(h,h^*) \le c_0 d^\alpha(h,h^*). \label{eq:Tsybakov_noise_def}
\end{equation}
At $\alpha =1$, the Tsybakov noise reduces to the more benign Massart noise. In terms of the slope around the decision boundary, the above condition can be restated as 
\begin{equation}
  \mathbb{P}_{X}\left(\left\{ x: \left|\eta(x) - \frac{1}{2}\right| \leq \gamma \right\}\right) \leq c_0' \gamma^{\frac{\alpha}{1 - \alpha}}
\end{equation}
for some constant $c_0' \geq 0$.

\subsection{Learning Policies and Performance Measure}

An online active learning strategy $\pi$ consists of a sequence of query rules $\{\upsilon_t\}_{t\ge 1}$ and a sequence of prediction rules $\{\lambda_t\}_{t\ge 1}$, where $\upsilon_t$ and $\lambda_t$ map from causally available information consisting of past actions, instances, and queried labels to, respectively, the query decision of $0$ (no query) or $1$ (query) and a predicted label at time $t$. With a slight abuse of notation, we also let $\upsilon_t$ and $\lambda_t$ denote the resulting query decision and the predicted label at time $t$ under these respective rules.

The performance of policy $\pi=(\{\upsilon_t\}, \, \{\lambda_t\})$ over a horizon of length $T$ is measured by the expected number of queries and the expected number of classification errors in excess to that of the Bayes optimal classifier $h^*$. These two performance measures, referred to as label complexity~$\mathbb E[Q(T)]$ and regret~$\mathbb E[R(T)]$, are given as follows.

\begin{align}
\mathbb E[Q(T)] =& \mathbb E\left[\sum_{t=1}^T \mathbbm 1[\upsilon_t = 1]\right] \\
\mathbb E[R(T)] =&  \mathbb E\left[\sum_{t\le T:\upsilon_t=0} \mathbbm 1 [ \lambda_t \neq Y_t]  - \mathbbm 1 [ h^*(X_t) \neq Y_t]\right] ,
\end{align}

where the expectation is with respect to the stochastic process induced by $\pi$. Note that regret measures the expected difference in the \emph{cumulative} classification errors over the entire horizon between a learner employing $\pi$ and an oracle that uses $h^*$ all through the horizon.

The objective is a learning algorithm that minimizes the label complexity $\mathbb E[Q(T)]$ with a constraint on the regret $\mathbb E[R(T)]$. The constraint, for example, can be either bounded by a constant independent of~$T$ or in a logarithmic order of $T$. 

\section{The Online Active Learning Algorithm}
\label{sec:5}


\subsection{The Basic Structure}

The algorithm operates under an epoch structure. When a fixed number $M$ of labels have been queried in the current epoch, this epoch ends and the next one starts. Note that the epoch length, lower bounded by $M$, is random due to the real-time active query decisions.
The algorithm maintains two sets in each epoch $k$: the version space $\mathcal H_k$ and the RoD $\mathcal D(\mathcal H_k)$ defined as the region of instances for which there is disagreement among hypotheses in the current version space $\mathcal H_k$. More specifically,
\begin{equation}
\mathcal D(\mathcal H_k)\,=\,\{x\in\mathcal X: \exists h_1,h_2\in\mathcal H_k,~ h_1(x)\neq h_2(x)\}.
\label{eq:D}
\end{equation}
The initial version space is set to the entire hypothesis space $\mathcal H$, and the initial RoD is the instance space $\mathcal X$.
At the end of each epoch, these two sets are updated using the $M$ labels obtained in this epoch, and the algorithm iterates into the next epoch.

At each time instant $t$ of epoch $k$, the query and prediction decisions are as follows.
If $x_t\in \mathcal D(\mathcal H_{k})$,  its label is queried. Otherwise, the learner predicts the label of $x_t$ using an arbitrary hypothesis in $\mathcal H_{k}$.

At the end of the epoch, $\mathcal H_k$ is updated as follows.  Let $\mathcal Z_k$ denote the set of the $M$ queried examples in this epoch. For a hypothesis $h$ in~$\mathcal H_k$, define its empirical error over $\mathcal Z_k$ as
\begin{equation}
\label{eq:em_error}
\epsilon_{\mathcal Z_k}(h)=\frac{1}{M}\sum_{(x,y)\in \mathcal Z_k}\mathbbm 1[h(x)\neq y].
\end{equation}
Let $h^*_k= \arg\min_{h\in \mathcal H_k} \epsilon_{\mathcal Z_k}(h)$ be the best hypothesis in $\mathcal H_k$ in terms of empirical error
over $\mathcal Z_k$.
The version space is then updated by eliminating each hypothesis $h$ whose empirical error over $\mathcal Z_k$ exceeds that of $h^*_k$ by a threshold $\Delta_{\mathcal Z_k}(h,h_k^*)$ that is specific to $h$, $h_k^*$, and $\mathcal Z_k$. Specifically,
  \begin{equation}
 \label{eq:agHt}
 \mathcal H_{k+1}=\{h\in\mathcal H_k:\epsilon_{\mathcal Z_k}(h)-\epsilon_{\mathcal Z_k}( h_k^*)<\Delta_{\mathcal Z_k}(h,h_k^*)\}.
 \end{equation}
The new RoD $\mathcal D (\mathcal H_{k+1})$ is then determined by $\mathcal H_{k+1}$ as in~\eqref{eq:D}.

\subsection{Threshold Design}

 We now discuss the key issue of designing the threshold $\Delta_{\mathcal Z_k}(h,h_k^*)$ for eliminating suboptimal hypotheses. This elimination threshold controls the tradeoff between two conflicting objectives: quickly shrinking the RoD (thus reducing label complexity) and managing the irreversible risk of eliminating good classifiers (thus increasing future classification errors). 

 In OLA, we obtain a more aggressive threshold design
focusing on empirical errors incurred over significant $(X,Y)$ examples determined by the current RoD.

Specifically, for a pair of hypotheses $h_1,h_2$, define
\begin{equation}
\epsilon_{\mathbb P}(h_1,h_2)= \mathbb P( h_1(X)\neq Y\wedge h_2(X)=Y),
\end{equation}
which is the probability that $h_1$ misclassifies a random instance but $h_2$ successfully classified. For a finite set $\mathcal Z$ of $(x,y)$ samples, the empirical excess error of $h_1$ over $h_2$ on $\mathcal Z$ is defined as
\begin{equation}
\epsilon_{\mathcal Z}(h_1,h_2)\,\triangleq\,\frac{1}{|\mathcal Z|}\sum_{(x,y)\in \mathcal Z}\mathbbm 1[ h_1(x)\neq y\wedge h_2(x)=y].
\end{equation}
The elimination threshold $\Delta_{\mathcal Z_k}(h,h_k^*)$  is set to:
\begin{equation}
\begin{aligned}
\label{eq:thres}
\Delta_{\mathcal Z_k}(h,h_k^*) & = \beta_{\mathcal H_{k},M}^2 + \beta_{\mathcal H_k,M}\left(\sqrt{\epsilon_{\mathcal Z_k}(h, h_k^*)}+\sqrt{\epsilon_{\mathcal Z_k}( h_k^*,h)}\right),
\end{aligned}
\end{equation}
where $\beta_{\mathcal H',n}=\sqrt{(4/n)\ln (16T^2\mathcal S(\mathcal H',2n)^2)}$ for an arbitrary hypothesis space $\mathcal H'$ and positive integer $n$. Here $\mathcal S(\mathcal H',n)$ is the $n$-th shattering coefficient of $\mathcal H'$. By Sauer's lemma~\cite{bousquet2004introduction}, $\mathcal S(\mathcal H',n)=O(n^{d'})$ with $d'$ being the VC dimension of $\mathcal H'$.

The choice of this specific threshold function will become clear in Sec.~\ref{subsec:regret} when the relationship between the empirical error difference of two hypotheses and the ensemble error rate difference under $\mathbb P$ is analyzed. 

  \begin{algorithm}[h]
   \caption{The OLA Algorithm}
   \label{alg:oal}
\begin{algorithmic}
 \STATE {\bfseries  Input:} Time horizon $T$, VC dimension $d$, parameter $m\in\mathbb N^+$.
   \STATE {\bfseries  Initialization:} Set $\mathcal Z_1=\emptyset$, Version space $\mathcal H_1=\mathcal H$, RoD $\mathcal D_1=\mathcal X$. Current epoch $k=1$. $M=\lceil m dT^{\frac{2-2\alpha}{2-\alpha}}\log T \rceil$.

   \FOR{$t=1$ {\bfseries to} $T$}
   \IF{$x_{t}\notin \mathcal D_{k}$}
   \STATE Choose any $h\in \mathcal H_{k}$ and label $x_{t}$ with $h(x_{t})$;
   \ENDIF
   \IF{$x_{t}\in \mathcal D_{k}$}
   \STATE Query label $y_{t}$ and let $\mathcal Z_{k}=\mathcal Z_{k}\cup \{(x_t,y_t)\}$;
    \IF{$|\mathcal Z_k|=M$}
    \STATE Update $\mathcal H_{k+1}$ and $\mathcal D_{k+1}$ according to \eqref{eq:D} and \eqref{eq:agHt} with the elimination threshold $\Delta_{\mathcal Z_k}$ given in \eqref{eq:thres};
    \STATE Let $k=k+1$;

\ENDIF

   \ENDIF
   \ENDFOR

\end{algorithmic}
\end{algorithm}

A detailed description of the algorithm is given in Algorithm~\ref{alg:oal}. The algorithm parameter $M$ is set to $\lceil m dT^{\frac{2-2\alpha}{2-\alpha}}\log T \rceil$, where $m$ is a positive integer whose value will be discussed in Sec.~\ref{subsec:label}. We point out that while the horizon length $T$ is used as an input parameter to the algorithm, the standard doubling trick can be applied when $T$ is unknown.

\section{Analysis of Regret and Label Complexity}

We first develop the following concentration inequality in Theorem~\ref{lemma:3} to establish the relationship between the empirical error and ensemble error rate of any pair of hypotheses. The proof employs the normalized uniform convergence VC bound~\cite{vapnik2015uniform}. Details can be found in the appendix A.

\begin{theorem} 
\label{lemma:3}
Let $\mathcal Z$ be a set of $n$ i.i.d. $(X,Y)$-samples under distribution $\mathbb P$. For all $h_1,h_2\in \mathcal H$, we have, with probability at least $1-\delta$, 
\begin{equation}
\begin{aligned}
&\epsilon_{\mathbb P}(h_1)-\epsilon_{\mathbb P}(h_2) \le \epsilon_{\mathcal Z}(h_1)-\epsilon_{\mathcal Z}(h_2)\\
&+\gamma_n^2+\gamma_n(\sqrt{\epsilon_{\mathcal Z}(h_1,h_2)}+\sqrt{\epsilon_{\mathcal Z}(h_2,h_1)}),
\end{aligned}
\end{equation}
where $\gamma_n =\sqrt{(4/n)\ln (8\mathcal S(\mathcal H,2n)^2/\delta)}$.
\end{theorem}

Since all samples in $\mathcal D_k$ are queried at epoch $k$ in the proposed OLA algorithm, we can see that $\mathcal Z_k$ is an i.i.d. sample of size $M$ from distribution $\mathbb P|\mathcal D_k$, which is defined as
\begin{equation}
\mathbb P|\mathcal D_k(x)=\begin{cases}
{\mathbb P(x)}/{\phi(\mathcal D_k)} &\mbox{ if } x\in\mathcal D_k\\
0 & \mbox{ otherwise}
\end{cases},
\end{equation}
 where $\phi(\mathcal D)=\mathbb P(X\in \mathcal D)$ for $\mathcal D\subseteq \mathcal X$.
 
 Therefore, we can apply Theorem~\ref{lemma:3} to each epoch $k$ with $\mathcal Z_k$ and $\mathbb P|\mathcal D_k$, which gives us the following corollary.

\begin{corollary}
\label{lemma:4}
Let $\beta_n=\sqrt{(4/n)\ln (16T^2\mathcal S(\mathcal H,2n)^2)}$. With probability at least $1-\frac1 {2T}$, for all $k\ge 1$ and for all $h\in \mathcal H_k$, we have
\begin{equation}
\label{lemmathres}
\begin{aligned}
\epsilon_{\mathbb P|\mathcal D_k}( h_k^*)-&\epsilon_{\mathbb P|\mathcal D_k}(h)
\le\epsilon_{\mathcal Z_k}( h_k^*)-\epsilon_{\mathcal Z_k}(h)+\beta_{M}^2\\
+&\beta_{M}\left(\sqrt{\epsilon_{\mathcal Z_k}(h_k^*,h)}+\sqrt{\epsilon_{\mathcal Z_k}(h,h_k^*)}\right).
\end{aligned}
\end{equation}
\end{corollary}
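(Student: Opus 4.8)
The plan is to obtain the Corollary by specializing Lemma~\ref{lemma:3} within each epoch, with the assignment $h_1=h_k^*$ and $h_2=h$, and then combining the per-epoch statements through a union bound. Setting $n=M$, $h_1=h_k^*$, and $h_2=h$ in Lemma~\ref{lemma:3} reproduces exactly the right-hand side of \eqref{lemmathres}, provided the confidence parameter $\delta$ is chosen so that the constant $\alpha_M$ coincides with $\beta_M$. Matching $\sqrt{(4/M)\ln(8\mathcal{S}(\mathcal{H},2M)^2/\delta)}$ with $\beta_M=\sqrt{(4/M)\ln(16T^2\mathcal{S}(\mathcal{H},2M)^2)}$ forces $8/\delta=16T^2$, i.e. $\delta=\tfrac{1}{2T^2}$, and this is the per-epoch failure probability I would carry through.

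The first substantive step is to justify that Lemma~\ref{lemma:3} applies to $\mathcal{Z}_k$ at all. As noted just before the Corollary, since the instances $\{X_t\}$ are i.i.d. from $\mathbb{P}_X$ and OLA queries precisely those instances falling in the within-epoch region $\mathcal{D}_k$, the $M$ queried pairs in epoch $k$ are, conditioned on $\mathcal{D}_k$, i.i.d. draws from the restricted distribution $\mathbb{P}|\mathcal{D}_k$. The subtlety here, and the crux of the argument, is that $\mathcal{D}_k$, the empirical minimizer $h_k^*$, and indeed the entire conditioning event are determined by the data of the earlier epochs, so $\mathbb{P}|\mathcal{D}_k$ is a \emph{random} distribution rather than the fixed $\mathbb{Q}$ assumed in Lemma~\ref{lemma:3}. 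I would resolve this by invoking Lemma~\ref{lemma:3} conditionally on $\mathcal{D}_k$: once $\mathcal{D}_k$ is fixed, $\mathbb{P}|\mathcal{D}_k$ is a fixed distribution and the function class $\mathcal{F}=\{g^{h_1}_{h_2}:h_1,h_2\in\mathcal{H}\}$, hence its shattering coefficient $\mathcal{S}(\mathcal{H},2M)$, does not depend on the data, so the hypotheses of Lemma~\ref{lemma:3} hold verbatim. Crucially, the lemma is a \emph{uniform} statement over all pairs $h_1,h_2\in\mathcal{H}$, so it remains valid for the data-dependent choice $h_1=h_k^*$; this is exactly why the Corollary is stated with the fixed class $\mathcal{H}$ rather than the random version space $\mathcal{H}_k$. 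Integrating the conditional failure probability $\le\tfrac{1}{2T^2}$ over the law of $\mathcal{D}_k$ then yields an unconditional per-epoch failure probability of at most $\tfrac{1}{2T^2}$.

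Finally I would assemble the uniform-over-epochs statement by a union bound. Because every epoch except possibly the last completes only after exactly $M\ge 1$ queries, and at most $T$ queries can occur over the horizon, the number of epochs realized by OLA is at most $T$ deterministically. Taking the union over these at most $T$ epochs multiplies the failure probability by at most $T$, giving a total of at most $T\cdot\tfrac{1}{2T^2}=\tfrac{1}{2T}$; on the complementary event, \eqref{lemmathres} holds simultaneously for every epoch $k$ and every $h\in\mathcal{H}$.

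The main obstacle is the conditioning argument of the second paragraph: making precise that the per-epoch bound survives the data-dependence of $\mathcal{D}_k$ and $h_k^*$, and that replacing the random number of epochs by the deterministic cap $T$ is what keeps the union bound rigorous. Everything else is the direct specialization of Lemma~\ref{lemma:3} and the arithmetic matching of $\alpha_M$ to $\beta_M$.
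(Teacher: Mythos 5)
Your proposal is correct and follows essentially the same route as the paper: apply Lemma~\ref{lemma:3} with $h_1=h_k^*$, $h_2=h$, $n=M$, $\delta=\frac{1}{2T^2}$, the conditional distribution $\mathbb P|\mathcal D_k$, and sample $\mathcal Z_k$, then union bound over the at most $T$ epochs to get total failure probability $\frac{1}{2T}$. Your additional care in justifying why the data-dependence of $\mathcal D_k$ and $h_k^*$ does not break the conditional application of the lemma is a point the paper leaves implicit, but it does not change the argument.
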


\vspace{-0.1cm}
\subsection{Regret}
\label{subsec:regret}

Next, using Theorem~\ref{lemma:3} we show that the expected regret of the proposed OLA algorithm is bounded by $1/2$.

\begin{theorem}
\label{th:regret}
The expected regret $\mathbb E[R(T)]$ of the OLA algorithm is bounded as follows:
$$
\mathbb E[R(T)]\le\frac 12.
$$
 \end{theorem}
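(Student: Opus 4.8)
The plan is to show that the per-step excess error vanishes whenever the Bayes optimal classifier $h^*$ survives in the version space, and that the failure event in which $h^*$ is mistakenly eliminated is rare enough to contribute at most $\tfrac12$ to the expected regret. Let $\mathcal E$ denote the event $\{h^*\in\mathcal H_k \text{ for all epochs } k\ge 0\}$. I would decompose $\mathbb E[R(T)]=\mathbb E[R(T)\mathbbm 1_{\mathcal E}]+\mathbb E[R(T)\mathbbm 1_{\mathcal E^c}]$ and prove (i) $R(T)=0$ on $\mathcal E$, (ii) $R(T)\le T$ always, and (iii) $\mathbb P(\mathcal E^c)\le \tfrac{1}{2T}$; combining these gives $\mathbb E[R(T)]\le T\cdot\tfrac{1}{2T}=\tfrac12$.

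The heart of the argument is establishing $\mathbb P(\mathcal E)\ge 1-\tfrac{1}{2T}$, which I would do by induction over $k$ on the high-probability event of Corollary~\ref{lemma:4}. The key observation is that $h^*$ is the \emph{pointwise} Bayes classifier, hence it minimizes the error rate under any conditioning of the marginal $\mathbb P_X$; in particular $\epsilon_{\mathbb P|\mathcal D_k}(h^*)\le \epsilon_{\mathbb P|\mathcal D_k}(h_k^*)$, so the left-hand side of \eqref{lemmathres} with $h=h^*$ is nonnegative. Rearranging Corollary~\ref{lemma:4} then gives $\epsilon_{\mathcal Z_k}(h^*)-\epsilon_{\mathcal Z_k}(h_k^*)\le \beta_M^2+\beta_M\bigl(\sqrt{\epsilon_{\mathcal Z_k}(h_k^*,h^*)}+\sqrt{\epsilon_{\mathcal Z_k}(h^*,h_k^*)}\bigr)=\Delta_{\mathcal Z_k}(h^*,h_k^*)$, which is exactly the retention condition of \eqref{eq:agHt}. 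Thus $h^*\in\mathcal H_k\Rightarrow h^*\in\mathcal H_{k+1}$; since $h^*\in\mathcal H_0=\mathcal H$, induction yields $h^*\in\mathcal H_k$ for all $k$ on this event, and Corollary~\ref{lemma:4} guarantees probability at least $1-\tfrac{1}{2T}$.

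Granting $\mathcal E$, I would show $R(T)=0$. Regret accrues only at self-labeling times, i.e. times $t$ with $\upsilon_t=0$, which by the algorithm occur precisely when $x_t\notin\mathcal D_k=\Psi(\mathcal H_k)$. By definition of $\Psi$, every hypothesis in $\mathcal H_k$ assigns the same label to such an $x_t$; since $h^*\in\mathcal H_k$ on $\mathcal E$, that common label is $h^*(x_t)$, so the prediction satisfies $\lambda_t=h^*(x_t)$ and each summand $\mathbbm 1[\lambda_t\neq Y_t]-\mathbbm 1[h^*(X_t)\neq Y_t]$ is zero. For the complementary event I use the deterministic bound that each summand lies in $\{-1,0,1\}$ and there are at most $T$ self-labeling times, so $R(T)\le T$. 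Assembling the three facts delivers the claim.

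The main obstacle is the adaptivity in the first step: $\mathcal H_k$, $\mathcal D_k$, and hence the conditioning distribution $\mathbb P|\mathcal D_k$ are random and depend on the earlier samples $\mathcal Z_1,\dots,\mathcal Z_{k-1}$, whereas Lemma~\ref{lemma:3} requires $\mathcal Z_k$ to be i.i.d.\ from that very distribution. The correct justification is to condition on the history determining $\mathcal H_k$ and $\mathcal D_k$, under which the freshly queried labels $\mathcal Z_k$ are i.i.d.\ from $\mathbb P|\mathcal D_k$, apply the VC bound per epoch, and take a union bound over $k\le T$ with $\delta=\tfrac{1}{2T^2}$. A minor technical point is the strict-versus-nonstrict inequality (the elimination rule uses $<$ while concentration gives $\le$), which is absorbed by the strictly positive slack term $\beta_M^2>0$ in the threshold. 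I emphasize that this regret bound invokes only the pointwise optimality of $h^*$ and not the Massart condition, the latter being reserved for the label-complexity analysis.
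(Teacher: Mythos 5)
Your proposal is correct and follows essentially the same route as the paper: an induction on the high-probability event of Corollary~\ref{lemma:4} showing $h^*$ survives in every $\mathcal H_k$ (using $\epsilon_{\mathbb P|\mathcal D_k}(h^*)\le\epsilon_{\mathbb P|\mathcal D_k}(h_k^*)$), the observation that $R(T)=0$ on that event because self-labeling occurs only outside $\Psi(\mathcal H_k)$ where all surviving hypotheses agree with $h^*$, and the bound $\mathbb E[R(T)]\le T\cdot\frac{1}{2T}=\frac12$. Your explicit handling of the adaptivity of $\mathcal D_k$ (conditioning on the history before applying the per-epoch VC bound) and of the strict-versus-nonstrict inequality are points the paper leaves implicit, but they do not change the argument.
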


\begin{proof}
Here we provide the sketch of the proof. The detailed proof can be found in the Appendix B. First we show that if the inequalities in Corollary~\ref{lemma:4} hold simultaneously for all $k\ge 1$, we have $h^*\in \mathcal H_k$ for all $k\ge 1$, which implies $R(T)=0$. Therefore by Corollary 1, we have  $\mathbb P(R(T)>0)\le \frac {1}{2T}$. Note that $R(T)\le T$, we have $\mathbb E[R(T)]\le \frac {1}{2T} \cdot T = \frac12 $ as desired.
\vspace{-0.2cm}
\end{proof}

\subsection{Label Complexity}
\label{subsec:label}

For the purpose of label complexity analysis, we define the following online disagreement coefficient, which is slightly different from the disagreement coefficient defined for offline active learning in~\cite{hanneke2007bound}. 

Recall the psuedo-metric $\rho$ defined in~\eqref{eq:rho}.  The online disagreement coefficient $\theta=\theta(\mathbb P,\mathcal H)$ is defined as
\begin{equation}
\label{eq:theta}
\theta=\sup\left\{\frac{\phi[\mathcal D(B(h^*,r))]}{r}:r>0 \right\},
\end{equation}
where $B(h,r)=\{h\in\mathcal H:\rho(h,h')<r\}$  is a ``hypothesis ball'' centered at $h$ with radius $r$.

The quantity $\theta$ bounds the rate at which the disagreement mass of the ball $B(h^*,r)$ grows with the radius $r$. It is bounded by $\sqrt d$ when $\mathcal H$ is $d$-dimensional homogeneous separators~\cite{hanneke2007bound}.

Next we  upper bound the label complexity for the proposed online active learning algorithm. 

\begin{theorem} Let $\mathbb E[Q(T)]$ be the expected label complexity of OLA. If $m> 324 (\theta c_0)^\frac{2}{\alpha} $, then there exists $C_1>0$ such that    
\begin{equation}
\mathbb E[Q(T)]\le  C_1 md T^{\frac{2-2\alpha}{2-\alpha}}(\log T+1)^2,
\end{equation} 
where $\theta=\theta(\mathbb P_X,\mathcal H)$  is the disagreement coefficient.
\end{theorem}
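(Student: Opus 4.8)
The plan is to reduce the label complexity to the number of completed epochs and then bound that number by establishing a geometric contraction of the disagreement mass $\phi(\mathcal D_k)$ across epochs, combined with an optional-stopping argument for a suitable supermartingale. Since each epoch terminates after exactly $M$ queries, we have $Q(T)\le (K+1)M$, where $K$ is the number of epochs completed by time $T$; equivalently, at any time $t$ lying in epoch $k=N(t)=\lfloor Q(t)/M\rfloor$, the current instance triggers a query with probability exactly $\phi(\mathcal D_k)=\mathbb P(X\in\mathcal D_k)$, so $Q(t)$ evolves as a time-inhomogeneous Bernoulli counting process whose rate is governed by the epoch index. Throughout I would condition on the high-probability event $\mathcal G=\{h^*\in\mathcal H_k\ \forall k\}$ from Corollary~\ref{lemma:4}, which holds with probability at least $1-\tfrac1{2T}$; on $\mathcal G^c$ the trivial bound $Q(T)\le T$ contributes at most $\tfrac12$ to the expectation and is absorbed into the constant.

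The heart of the argument is the contraction estimate
\begin{equation}
\phi(\mathcal D_{k+1})\le \tfrac12\Big(1+\tfrac{8\theta}{\gamma\sqrt m}\Big)\,\phi(\mathcal D_k),
\end{equation}
which I would prove in four steps. First, because every $h\in\mathcal H_k$ agrees with $h^*$ outside $\mathcal D_k$, the excess error localizes: $\epsilon_{\mathbb P}(h)-\epsilon_{\mathbb P}(h^*)=\phi(\mathcal D_k)\big(\epsilon_{\mathbb P|\mathcal D_k}(h)-\epsilon_{\mathbb P|\mathcal D_k}(h^*)\big)$. Second, the Massart condition gives $\rho(h,h^*)\le\frac1{2\gamma}\big(\epsilon_{\mathbb P}(h)-\epsilon_{\mathbb P}(h^*)\big)$, so the new radius obeys $r_{k+1}=\sup_{h\in\mathcal H_{k+1}}\rho(h,h^*)\le\frac{\phi(\mathcal D_k)}{2\gamma}\sup_{h\in\mathcal H_{k+1}}\big(\epsilon_{\mathbb P|\mathcal D_k}(h)-\epsilon_{\mathbb P|\mathcal D_k}(h^*)\big)$. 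Third, for a surviving $h\in\mathcal H_{k+1}$ the elimination rule bounds $\epsilon_{\mathcal Z_k}(h)-\epsilon_{\mathcal Z_k}(h_k^*)$ by the threshold, and since $h_k^*$ is the empirical minimizer and $h^*\in\mathcal H_k$ we get $\epsilon_{\mathcal Z_k}(h)-\epsilon_{\mathcal Z_k}(h^*)\le\Delta_{\mathcal Z_k}(h,h_k^*)$; feeding this into Corollary~\ref{lemma:4} and bounding the empirical-disagreement terms controls the conditional excess error of survivors at the order $O(\beta_M)=O(1/\sqrt m)$. Fourth, the definition of the online disagreement coefficient yields $\phi(\mathcal D_{k+1})=\phi(\Psi(\mathcal H_{k+1}))\le\theta\,r_{k+1}$. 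Chaining the four bounds and tracking the constants produces the stated ratio $\xi=\tfrac12(1+\tfrac{8\theta}{\gamma\sqrt m})$; the hypothesis $m\ge 256\theta^2/\gamma^2$ forces $\tfrac{8\theta}{\gamma\sqrt m}\le\tfrac12$, hence $\xi\le\tfrac34<1$ and $\phi(\mathcal D_k)\le\xi^{k}$.

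With the contraction in hand I would finish by the submartingale device. Set $a=\xi^{-1/M}>1$ and consider $S(t)=a^{Q(t)}-(a-1)\xi^{-1}t$. Using $\phi(\mathcal D_{N(t)})\le\xi^{N(t)}\le\xi^{-1}\xi^{Q(t)/M}$ together with the identity $a^{Q(t)}\xi^{Q(t)/M}=(a\,\xi^{1/M})^{Q(t)}=1$, the conditional one-step increment of $a^{Q(t)}$ equals $a^{Q(t)}\phi(\mathcal D_{N(t)})(a-1)\le(a-1)\xi^{-1}$, so $S(t)$ is a supermartingale (equivalently $-S(t)$ a submartingale). Optional stopping at $t=T$ gives $\mathbb E[a^{Q(T)}]\le 1+(a-1)\xi^{-1}T$, and Jensen's inequality yields $a^{\mathbb E[Q(T)]}\le\mathbb E[a^{Q(T)}]$. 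Taking logarithms, $\mathbb E[Q(T)]\le\frac{\ln(1+(a-1)\xi^{-1}T)}{\ln a}=\frac{M\,\ln(1+(a-1)\xi^{-1}T)}{\ln(1/\xi)}$; bounding the numerator logarithm by $O(\log T)$, substituting $M=\lceil md\log T\rceil$, and using $\ln(1/\xi)=\log\frac{2}{1+8\theta/(\gamma\sqrt m)}$ gives the claimed $\frac{2md}{\log(2/(1+8\theta/(\gamma\sqrt m)))}(\log T+1)^2=O(d\log^2 T)$.

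The main obstacle is the contraction estimate, and specifically the third step: converting the purely empirical survival guarantee into a population bound on the conditional excess error of every surviving hypothesis that is uniformly $O(1/\sqrt m)$. This requires simultaneously invoking the Massart lower bound $\epsilon_{\mathbb P|\mathcal D_k}(h)-\epsilon_{\mathbb P|\mathcal D_k}(h^*)\ge 2\gamma\,\rho_{\mathcal D_k}(h,h^*)$ and the VC upper bound of Corollary~\ref{lemma:4}, then resolving the resulting quadratic in $\sqrt{\rho_{\mathcal D_k}(h,h^*)}$ while keeping the constants sharp enough to yield the precise ratio $\xi$. A secondary subtlety is that the samples $\mathcal Z_k$ across epochs are dependent and the contraction only holds on $\mathcal G$, so the supermartingale must be run conditionally on $\mathcal G$; the matching choice $a=\xi^{-1/M}$ is exactly what makes the exponential and linear terms telescope so that the drift stays bounded.
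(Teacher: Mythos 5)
Your proposal follows essentially the same route as the paper: the same geometric contraction of $\phi(\mathcal D_k)$ obtained by combining the Massart condition, the disagreement coefficient, and Corollary~\ref{lemma:4} (you phrase it via the surviving ball radius $r_{k+1}$, the paper via the contrapositive elimination of all $h$ with $\rho(h,h^*)>c\phi(\mathcal D_k)/\theta$), followed by the identical supermartingale $a^{Q(t)}-(a-1)\xi^{-1}t$ with $a=\xi^{-1/M}$, optional stopping, and Jensen. The one bookkeeping difference is that the paper obtains the factor $\xi=\tfrac{1+c}{2}$ by folding the $\tfrac1{2T}$ failure probability into the unconditional bound $\mathbb E[\phi(\mathcal D_{k+1})\mid\phi(\mathcal D_k)]\le\tfrac{1+c}{2}\phi(\mathcal D_k)$ (so no conditioning on $\mathcal G$ is needed in the martingale step), rather than from the constant-chaining on the good event as you suggest; this resolves the ``secondary subtlety'' you flag at the end.
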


Note that $m$ is a constant determined by the algorithm, the label complexity  $\mathbb E[Q(T)]$ has an order of $O(dT^{\frac{2-2\alpha}{2-\alpha}}\log^2 T)$. For the Massart noise condition at $\alpha=1$, the label complexity is $O(d\log^2 T)$.

\begin{proof}
We have discussed in Sec.~\ref{sec:1.2} the key ideas and techniques used in the proof. The detailed proof can be found in Appendix C. 


\end{proof}

\subsection{Order Optimality}
\label{subsec:lower_bound}

We now establish the order optimality of the label complexity of OLA under a bounded regret constraint. This is obtained by establishing a lower bound on the label complexity feasible under any policy with a bounded regret. 

\begin{theorem}
Consider the Tsybakov noise satisfying the following condition with a parameter $\alpha\in (0,1)$: there exist constants $c_1$ and $c_2$ independent of $x \in \mathcal{X}$ such that $\frac{c_1}{2} r_0(x)^{\frac{1}{\alpha} - 1} \leq \left| \eta(x) - \frac{1}{2} \right| \leq \frac{c_2}{2} r_0(x)^{\frac{1}{\alpha} - 1} $ holds for all $x \in \mathcal{X}$ where $\displaystyle r_0(x) = \inf_{\{h:  x \in \DIS(h, h^*)\} } \rho(h, h^*)$.
The label complexity of all policies with bounded regret is of order $\Omega(T^{\frac{2 - 2\alpha}{2 - \alpha}})$.
  \label{thm:lower_bound}
\end{theorem}

Note that a lower bound on the noise (i.e., an upper bound specified through the constant $c_2$ on the slope of $\eta(x)$ passing $1/2$) is further imposed in order to establish a tight lower bound on label complexity for a specific noise level. We point out that while we focused on the constraint of a bounded regret, the analysis can be easily modified to obtain lower bounds under regret constraints of different orders. Specifically, we can show a lower bound of $\Omega \left(\min \{ T^{2(1 - \alpha)(1 - \epsilon)}, T^{\frac{2 - 2\alpha}{2 - \alpha}} \} \right)$ under a regret constraint of order $O(T^{\epsilon})$ for some $\epsilon > 0$. It is also straightforward to modify the lower bound analysis to accommodate different problem models (e.g., those studied in~\cite{cavallanti2009linear,yang}).

\begin{proof} 
The key in establishing the lower bound is to identify a limiting subproblem inherent to the online classification problem that determines the label complexity. We show that an inherent binary hypothesis testing problem presents such a limit. For this specific subproblem, we show that the probability of the event where label complexity is capped at $\Omega(T^{\frac{2 - 2\alpha}{2 - \alpha}})$ is small if the regret on the subproblem has to be bounded. The detailed proof is given in Appendix D.
\end{proof}

For the case of Massart Noise, we can establish a lower bound of $\Omega(\log T)$ under the constraint of a sublinear regret budget. The basic proof technique follows similar ideas as that for the Tsybakov noise but with a simplified analysis (See Appendix D). We summarize the lower bound for the case of Massart Noise in the following theorem.

\begin{theorem}
	Consider the Massart Noise model where $\eta(x)$ is bounded away from $1/2$ by a parameter $\gamma > 0$, i.e., for all $x \in \mathcal{X}$, $| \eta(x) - 1/2 | \geq \gamma$. The label complexity of all policies that achieve a sublinear regret under the above Massart Noise model is of the order $\Omega(\log T)$.
	\label{thm:massart_lb}
\end{theorem}

For constraints of sublinear but unbounded regret, the above lower bound is tight since it matches with the upper bound on the label complexity of RW-OLA (see Sec.~\ref{sec:extensions_and_discussions}). Under the constraint of bounded regret, however, we conjecture a $\Omega(\log^2 T)$ lower bound on label complexity for a general hypothesis space with an infinite number of hypotheses. \footnote{The intuition behind this conjecture is as follows. Let $N(\epsilon)$ denote the $\epsilon$-covering number of $\mathcal{H}$ and $\mathcal{C}$ be an associated $\epsilon$-cover that has $N(\epsilon)$ hypotheses. Specifically, an $\epsilon$-cover $\mathcal{C}$ of $\mathcal{H}$ is a subset of hypothesis $\{h_1, \dots, h_N\}$ such that for any $h \in \mathcal{H}$ there exists an $i \in \{1, 2, \dots, N\}$ such that $\rho(h, h_i) \leq \epsilon$ and the $\epsilon$-covering number of $\mathcal{H}$ is the size of the smallest $\epsilon$-cover of $\mathcal{H}$. Following the same line of arguments in the proof of Theorem~\ref{thm:massart_lb}, we can show that for a hypothesis $h \in \mathcal{C}$, the policy needs to query $\Omega(\log T)$ instances in $\mathcal{D}(h, h^*)$ to ensure a bounded regret and this needs to hold simultaneously for all $h \in \mathcal{C}$ by the end of the time horizon. Using the uniformity of the cover $\mathcal{C}$, we can show that the expected number of queries to hit $\Omega(\log T)$ queries in $\DIS(h, h^*)$ for all $h \in \mathcal{C}$ is $\Omega(\log T \log N(\epsilon))$. Choosing $\epsilon \sim T^{-1/2}$ results in a bound of $\Omega(\log^2 T)$ on label complexity.}

\section{Extensions and Discussions}
\label{sec:extensions_and_discussions}
\subsection{Tradeoff Between Label Complexity and Regret}
In this section, we show that OLA can be modified to operate on a different point on the tradeoff curve of regret vs. label complexity.

In OLA, the threshold for elimination is constructed conservatively to achieve a bounded regret. More specifically, the outage probability of eliminating $h^*$ from the version space (i.e., the parameter $\delta$ in Theorem~\ref{lemma:3}) in each epoch is capped at a small value $1/T^2$ that diminishes with $T$. We now consider a variant of OLA in which the elimination probability $\delta$ is set to a constant in order to quickly shrink RoD for a lower label complexity. To mitigate the high probability of $h^*$ being eliminated, which may result in a linear regret order, we build in a verification stage at the beginning of each epoch for the algorithm to self recognize and recover from the event of $h^*$ being eliminated. The key idea is to devise a biased random walk on the version spaces that allows the algorithm to trace back to a previous version space in the event of $h^*$ being eliminated. We refer to this variant of OLA as RW-OLA.   

Before delving into the details of the verification stage, we define the parent and child relationship between version spaces. For each epoch $k$, if $ \mathcal H_k$ is obtained by eliminating some of the hypotheses in the version space $\mathcal H_{r(k)}$ of a previous epoch $r(k)$, we say that $\mathcal H_{r(k)}$ is the parent of $ \mathcal H_k$ and $ \mathcal H_k$ is a child of $\mathcal H_{r(k)}$. Note that $\mathcal H_k$ is a subset of $\mathcal H_{r(k)}$.

\paragraph{Verification Stage}
In the verification stage of epoch $k$, the query and prediction decision are based on its parent version space $\mathcal H_{r(k)}$ and its corresponding RoD. When a fixed number $M$ of labels have been queried, we start the verification process as follows.

Let $\mathcal Z_k'$ denote the set of the $M$ queried examples in the verification stage. We examine two values in terms of the empirical error over $\mathcal Z_k'$: (1) $\min_{h\in\mathcal H_k} \epsilon_{\mathcal Z_k'} (h)$: the minimum empirical error inside $\mathcal H_k$; (2) $\min_{h\notin\mathcal H_k} \epsilon_{\mathcal Z_k'} (h)$: the minimum empirical error outside $\mathcal H_k$. Intuitively, if $h^*\in\mathcal H_k$, the difference 
\begin{equation}
\label{empi}
\min_{h\notin\mathcal H_k} \epsilon_{\mathcal Z_k'} (h)- \min_{h\in\mathcal H_k} \epsilon_{\mathcal Z_k'} (h)
\end{equation}
will be large, and vice versa. We hence determine the outcome of the verification stage based on whether this gap between the empirical errors outside and inside the current version space $\mathcal{H}_k$ exceeds a properly designed threshold. If the verification passes, indicating that $h^*\in \mathcal{H}_k$ with a sufficiently high probability, the epoch proceeds in the same way as in OLA, and the current version space $\mathcal{H}_k$ is further pruned to form a new version space $\mathcal{H}_{k+1}$. If, on the other hand, the verification fails, the current epoch ends, and the algorithm traces back to the parent of $\mathcal{H}_k$ by setting $\mathcal H_{k+1} =\mathcal H_{r(k)}$. The evolution of the version spaces across epochs follows a biased random walk as detailed below. 

\paragraph{Random Walk on a Version-Space Tree}

Based on the outcome of the verification stage, the version space $\mathcal H_{k+1}$ of epoch $k+1$ is either a child or a parent of $\mathcal H_{k}$.
In particular, following the evolution of the version spaces, we can construct a growing tree to record the parent-children relationship between version spaces. In this tree structure, each node represents a version space, and the version space sequence $\{ \mathcal{H}_k \}_{k \geq 1}$ forms a random walk on the tree. Illustrated in Fig.~\ref{fig:random_walk_sample_path} is a sample path of the random walk on a tree for a hypothesis space consisting of threshold classifiers $\mathcal H=\{h_z|0\le z\le 1\}$ on $\mathcal X=[0,1]$ where $h_z=[z,1]$. We can see that on this tree, the verification failed in epochs $2$ and $5$ and but passed in epochs $1$, $3$, $4$, and $6$.

\begin{figure}[h]
\label{sp}
\centering
\includegraphics[scale=0.6]{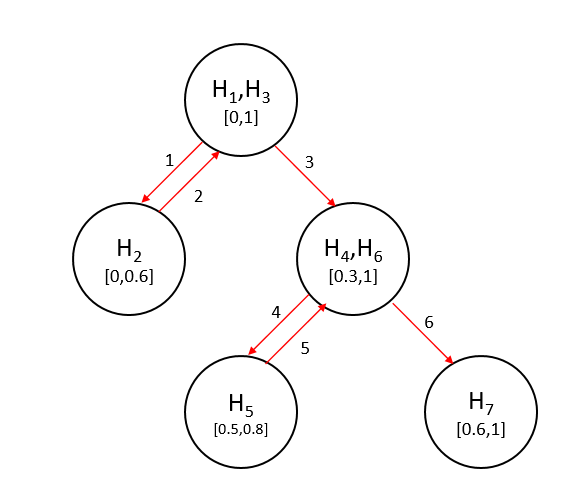} 
\caption{\small A typical random walk on a version space tree.}
\label{fig:random_walk_sample_path}
\end{figure}


\paragraph{Threshold design}

In addition to the elimination threshold for pruning the version space as in OLA, RW-OLA also requires a verification threshold. As explained below, these two thresholds are coupled and need to be designed jointly to ensure the desired performance of the algorithm. 

The verification stage performs a binary detection problem: whether $h^*$ is inside the current version space $\cH_k$. On one hand, to ensure that the random walk on the version spaces is biased toward the direction of correct pruning, the verification threshold needs to be chosen to ensure a sufficiently accurate detection outcome. On the other hand, the hardness of this detection problem is determined by how close the two cases of $h^*\in \cH_k$ and $h^*\notin \cH_k$ are. More specifically, the hardness of this binary detection problem is determined by the error rate difference between the best hypothesis inside $\mathcal H_k$ and the best hypothesis outside $\mathcal H_k$:

\begin{equation}
\label{true}
\min_{h\notin\mathcal H_k} \epsilon_{\mathbb P_{r(k)}} (h)- \min_{h\in\mathcal H_k} \epsilon_{\mathbb P_{r(k)}} (h),
\end{equation}
 which, in turn, is determined by the elimination threshold in epoch $r(k)$ when $\mathcal H_k $ is obtained. 
More specifically, while a smaller  elimination threshold leads to more aggressive pruning of the version space, it results in a smaller performance gap between hypotheses outside $\mathcal{H}_k$ and those inside $\mathcal{H}_k$ since near-optimal hypotheses may be eliminated from the version space. Consequently, the verification stage faces a harder detection problem. In summary, the two thresholds need to be designed jointly to balance the label complexity associated with verification and with normal learning.  
%
%

Let $p$ denote the desired bias of the random walk. This implies that (i) when $h^*\in \mathcal H_k$, the verification passes with a probability no smaller than $p$; (ii) when $h^*\notin \mathcal H_k$, the verification fails with a probability no smaller than $p$.  Let
\begin{equation}
\Delta(M,\delta) = 2\sqrt{2\frac{\log \mathcal S(\mathcal H,2M)+\log \frac 2\delta}{M}}.
\end{equation}

We set the elimination threshold to $6\Delta(M,1-\sqrt p)$ so that the error rate difference in (\ref{true}), which determines the hardness of the verification problem, is at least $4\Delta(M,1-\sqrt p)$ with high probability. The verification threshold is set to $2\Delta(M,1-\sqrt p)$ to guarantee the desired bias of $p$. A detailed derivation of the thresholds is given in Appendix E.

\begin{algorithm}
   \caption{The Random Walk OLA (RW-OLA) Algorithm}
   \label{alg:rwoal}
\begin{algorithmic}
 \STATE {\bfseries  Input:}  VC dimension $d$, parameter $m\in\mathbb N^+$.
   \STATE {\bfseries  Initialization:} Set Version space $\mathcal H_1=\mathcal H$, RoD $\mathcal D_1=\mathcal X$. Current epoch $k=1$. $M= m d $. Parents $r(1)=1$

   \WHILE{$t\le T$}
	\STATE {\bfseries Verification:}
	\STATE Let $Z_{k}' = \emptyset$
	\WHILE{$| Z_{k}'| < M$}
		\STATE Let $t=t+1$
	\IF {$x_t \notin \mathcal D_{r(k)}$}
		\STATE Choose any $h\in \mathcal H_{r(k)}$ and label $x_{t}$ with $h(x_{t})$;
	\ELSE
		\STATE Query label $y_{t}$ and let $\mathcal Z_{k}'=\mathcal Z_{k}'\cup \{(x_t,y_t)\}$;
	\ENDIF
	\ENDWHILE
			\IF{$\min_{h\notin\mathcal H_k} \epsilon_{\mathcal Z_k'} (h) - \min_{h\in\mathcal H_k} \epsilon_{\mathcal Z_k'} (h) < 2\Delta (M, 1-p)$}
			\STATE Let $\mathcal H_{k+1}=\mathcal H_{r(k)}$, $\mathcal D_{k+1}=\mathcal D_{r(k)}$, $r(k+1) = r(r(k))$, $k\leftarrow k+1$;
			\STATE continue;
			
			\ENDIF
	
   \STATE {\bfseries Elimination:}
	\STATE Let $Z_{k} = \emptyset$
\WHILE{$| Z_{k}| < M$}
		\STATE Let $t=t+1$
	\IF {$x_t \notin \mathcal D_{k}$}
		\STATE Choose any $h\in \mathcal H_{k}$ and label $x_{t}$ with $h(x_{t})$;
	\ELSE
		\STATE Query label $y_{t}$ and let $\mathcal Z_{k}=\mathcal Z_{k}\cup \{(x_t,y_t)\}$;
	\ENDIF
	\ENDWHILE

   	\STATE Update $\mathcal H_{k+1}$ as following:
\begin{equation}
 \mathcal H_{k+1}=\{h\in\mathcal H_k:\epsilon_{\mathcal Z_k}(h)-\epsilon_{\mathcal Z_k}( h_k^*)<6\Delta(M,1-\sqrt p)\}
 \end{equation}
\STATE Update $\mathcal D_{k+1}$ according to (10).
      \STATE Let $r(k+1)=k$, $k=k+1$;

   \ENDWHILE

\end{algorithmic}
\end{algorithm}



A detailed description of the algorithm is given in Algorithm~\ref{alg:rwoal}. The algorithm parameter $M$ is set to $\lceil m d \rceil$, where $m$ is a positive integer whose value will be discussed in the analysis below. \\

\paragraph{Analysis of Regret and Label Complexity}
\label{rw_ana}


\begin{theorem} Let $\mathbb E[Q(T)]$ be the expected label complexity of the RW-OLA algorithm. If $m> 1024 (\theta c_0)^2$, under Massart noise condition, there exists $C_2>0$ such that    
\begin{equation}
\mathbb E[Q(T)]\le  C_2 md \log T,
\end{equation} 
where $\theta=\theta(\mathbb P_X,\mathcal H)$  is the disagreement coefficient.
\label{thm:lc_RW_OLA}
\end{theorem}

\begin{proof}
Here we provide a sketch of the proof. The detailed proof can be found in the Appendix E. We first show that the bias of the random walk is indeed bounded above $p$ with the chosen thresholds. We then show that when the verification passes, the RoD in the next epoch will shrink with a fixed rate $c$. Based on these two statements, we can show that the expected RoD is decreasing exponentially with rate $c_1=1-p+c^2p<1$. The same submartingale technique used in analyzing OLA is then used to bound the label complexity of RW-OLA.

\end{proof}
Under Massart noise, the epoch length for OLA is $md\log T$, which leads to $O(\log^2 T)$ label complexity. For RW-OLA, the epoch length is only $md$, which makes the label complexity only $O(\log T)$. In other words, to make sure RoD decreases exponentially, OLA requires epoch length to be $O(d\log T)$ but RW-OLA only requires it to be $O(1)$.


\begin{theorem} Let $\mathbb E[R(T)]$ be the expected regret of the RW-OLA algorithm. If $m> 1024 (\theta' c_0)^2$ and $\theta'>0$, under Massart noise condition, there exists $C_3>0$ such that    
\begin{equation}
\mathbb E[R(T)]\le  C_3 md \log T,
\end{equation} 
where 
\begin{equation}
\label{eq:theta_prime}
\theta'=\inf\left\{\frac{\phi[\mathcal D(B(h^*,r))]}{r}:r>0 \right\}.
\end{equation}  
is the modified disagreement coefficient.
\label{thm:reg_RW_OLA}
\end{theorem}

\begin{proof}
Since an epoch $k$ with $h^*\in\mathcal H_k$ incurs no regret, we only need to consider the case where $h^*\notin \mathcal H_k$. In this case, based on the RW-OLA algorithm, regret incurs if and only if the instance falls into a subset outside of RoD. Based on the bias of the random walk, we can show that the expected ratio of that subset to the current RoD is bounded by a constant. Since queries occur whenever the instances fall inside the RoD, we can show that the expected regret is upper bounded by this constant multiplying the expected label complexity, which is $O(d\log T)$. See Appendix F for the detailed proof.
\end{proof}

\subsection{Implementation for Homogeneous Linear Classification}
\label{sec:3.3}
 There are several steps in OLA and RW-OLA that can be computational expensive, which is inherent to the disagreement-based approach. Specifically, maintaining the version space and RoD, and computing the best empirical hypothesis $h^*_k$ can be costly. We discuss here approximate implementations with manageable computational complexity for homogeneous linear classification, drawing inspiration from techniques of using surrogate loss \cite{hanneke2012surrogate} and the Query-by-Committee approaches \cite{freund1997selective,dasgupta2005analysis}.

In homogeneous linear classification, $\mathcal X $ is the surface of the $d$-dimension unit Euclidean sphere. Each hypothesis, as a linear separator that passes the origin, is given by a unit vector $\mathbf u \in \mathbb R^d$ such that the corresponding concept is $\{\mathbf x\in\mathcal X: \mathbf u\mathbf x \ge 0\}$.

To estimate the best empirical hypothesis $h^*_k$, we use hinge loss function $l(z) = \max \{1-z,0\}$ to replace the 0-1 loss function in~(\ref{eq:em_error}). Then,  the best empirical hypothesis $\hat {h}^*_k$ under hinge loss is given by
\begin{equation}
\label{eq:hinge_em_error}
\hat {h}^*_k=\min_{h\in\mathcal H_k}\sum_{(x,y)\in \mathcal Z_k} \max \{1-(2y-1)\mathbf u x,0\}.
\end{equation}
Then standard  linear classification algorithms such as SVM can be employed to compute  $\hat {h}^*_k$.

The version space is approximated with $N$ constituent hypotheses sampled uniformly at random. Specifically, at  $t=1$, we  sample $N$ hypotheses uniformly at random from the entire hypothesis space $\{\mathbf u\in \mathbb R^d, ||\mathbf u||=1\}$ and form $\hat {\mathcal H_1}$. At each epoch $k$, for each hypothesis $h\in\hat {\mathcal H}_{k}$, we check whether it should be eliminated based on (\ref{eq:agHt}) and label them as +1 or -1 accordingly. Then, we run a linear classification algorithm to find a linear classifier $\mathbf \omega \mathbf u+b\ge 0$ that separates them. To obtain an approximate of 
 the new version space $\hat {\mathcal H}_{k+1}$,  we again sample $N$ hypotheses uniformly at random from $\{\mathbf u:\mathbf \omega \mathbf u+b\ge 0\}$ to form the next version space $\hat {\mathcal H}_{k+1}$.

Since the version space is estimated by a finite number of hypotheses, instead of maintaining the RoD explicitly, we check whether $x_t\notin \mathcal D_k$  by checking whether all $h\in\hat {\mathcal H}_{k}$ agree on $x_t$. A detailed description of the algorithm is given in Algorithm~\ref{alg:oal2}. 

 For RW-OLA, both the verification stage and elimination stage can be implemented similarly. In particular, the elimination stage of RW-OLA is exactly the same as OLA except the threshold will be different. Therefore, it can be done by replacing the threshold in step 2 to maintain the version space and RoD. For the verification stage, which involves finding the best empirical hypothesis inside and outside of $\mathcal H_k$, can be done using the hinge loss replacement in (\ref{eq:hinge_em_error}) with a standard linear classification algorithm as well.


  \begin{algorithm}[h]
   \caption{ OLA for Homogeneous Linear Classification}
   \label{alg:oal2}
\begin{algorithmic}
   \STATE {\bfseries  Initialization:} Set $\mathcal Z_0=\emptyset$, Random sample  $\hat {\mathcal H}_{0}$ uniformly from $\mathcal H$. 

   \FOR{$t=1$ {\bfseries to} $T$}
   \IF{All $h\in\hat {\mathcal H}_{k}$ agree on $x_t$}
   	\STATE Choose any $h\in \hat {\mathcal H}_{k}$ and label $x_{t}$ with $h(x_{t})$;
   \ELSE
   	\STATE Query label $y_{t}$ and let $\mathcal Z_{k}=\mathcal Z_{k}\cup \{(x_t,y_t)\}$;
    	\IF{$|\mathcal Z_k|=M$}
		\STATE 1. Find $h^*_k$ using $\mathcal Z_k$ and (\ref{eq:hinge_em_error});
		\STATE 2. For all $h\in\hat {\mathcal H}_{k}$, check whether $h\in {\mathcal H}_{k+1}$ based on (\ref{eq:agHt}) and label them accordingly;
		\STATE 3. Find linear classifier $\mathbf \omega \mathbf u+b\ge 0$ for $\hat {\mathcal H}_{k}$ and its label;
		\STATE 4. Random sample $\hat {\mathcal H}_{k+1}$ from $\{\mathbf u:\mathbf \omega \mathbf u+b\ge 0\}$;

	\ENDIF

   \ENDIF
   \ENDFOR

\end{algorithmic}
\end{algorithm}

\section{Simulation Examples}
\label{sec:simulation}

We first compare the label complexity of OLA and RW-OLA with existing disagreement-based active learning algorithms. We first consider a one-dimensional instance space $\mathcal X=[0,1]$ and threshold classifiers with $\mathcal H=\{h_z|0\le z\le 1\}$ where $h_z=[z,1]$. Note that the VC dimension $d=1$.  We set $\mathbb P_X$ to be the uniform distribution. Figure 2 and 3 show the comparison under different Tsybakov noise conditions. 

\begin{figure}[h]
\label{fig:2}
\centering
\includegraphics[scale=0.5]{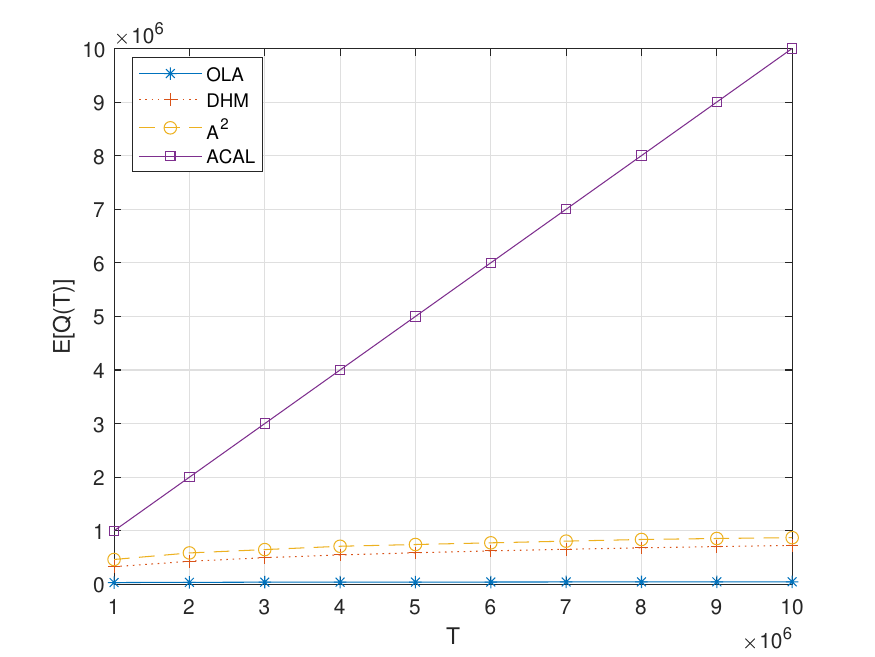} 
\includegraphics[scale=0.5]{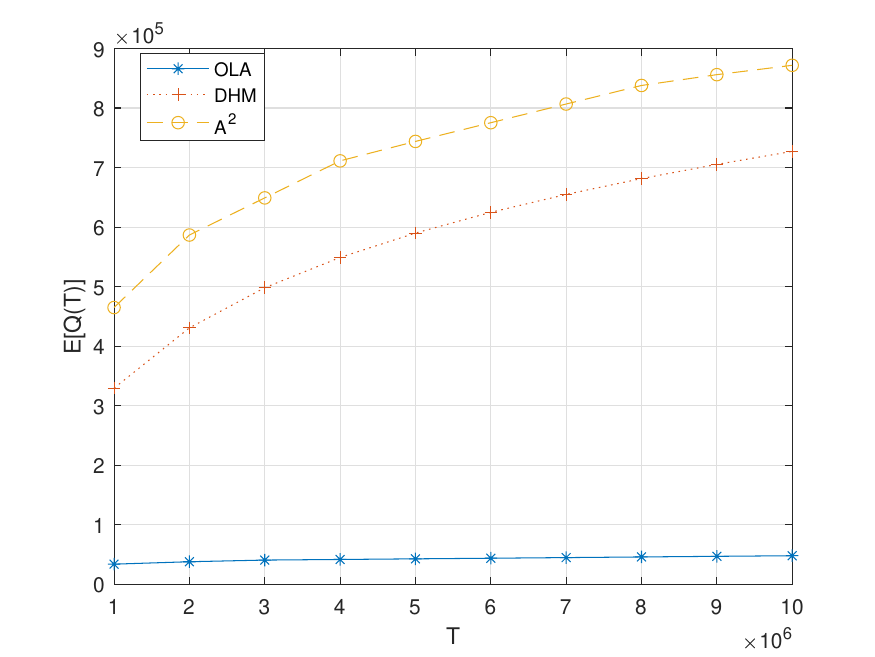}  
\caption{\small Comparison with $A^2$, DHM, and ACAL ($d=1$, Tsybakov noise with $\alpha=1$ and $c_0=5$, $h^*=h_{0.5}$).}
\end{figure}

\begin{figure}[h]
\label{fig:3}
\centering
\includegraphics[scale=0.5]{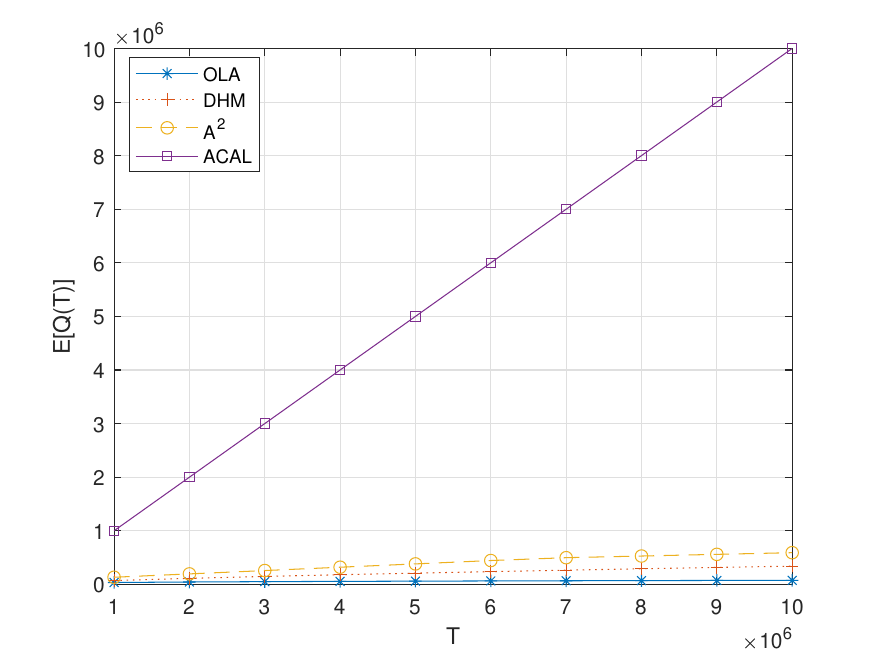} 
\includegraphics[scale=0.5]{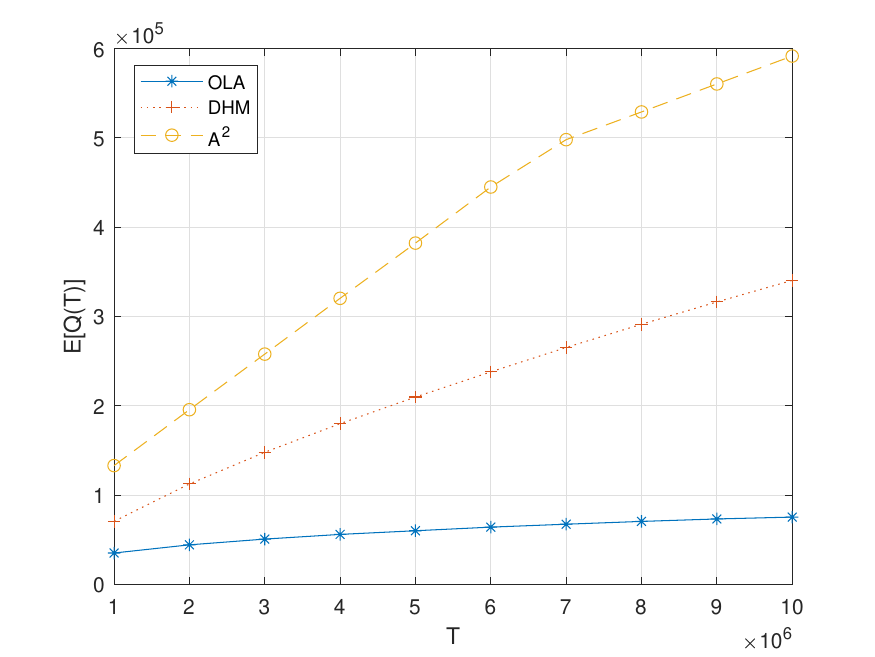}  
\caption{\small Comparison with $A^2$, DHM, and ACAL ($d=1$,  Tsybakov noise with $\alpha=0.5$ and $c_0=1$, $h^*=h_{0.5}$).}
\end{figure}

\begin{figure}[h]
\centering
\includegraphics[scale=0.5]{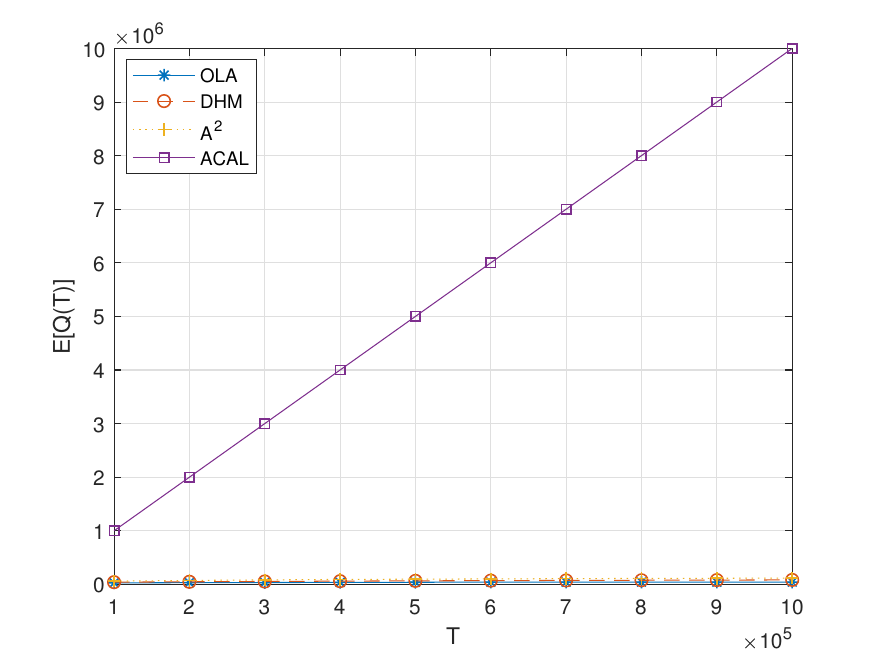} 
\includegraphics[scale=0.5]{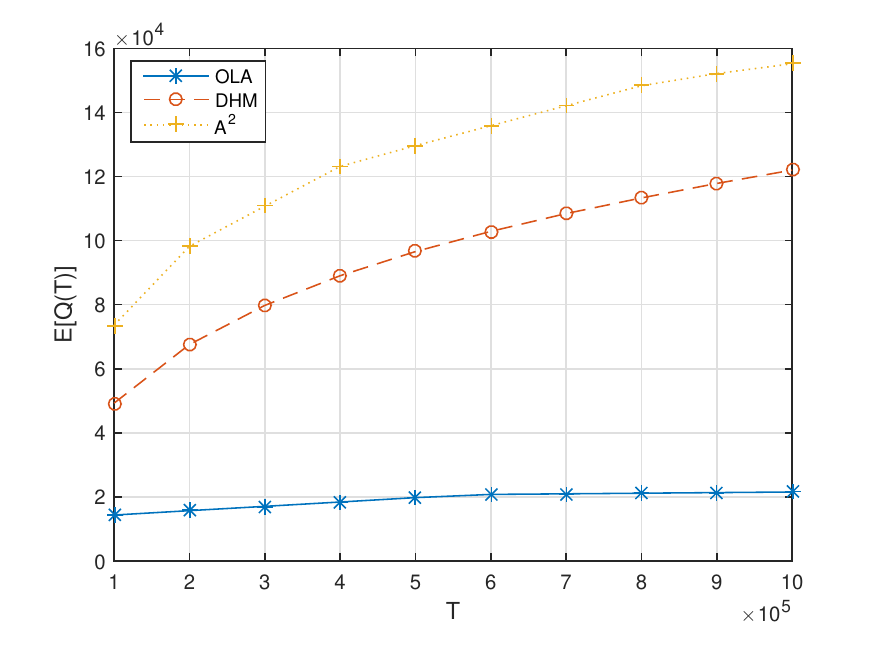}  
\caption{\small Comparison with $A^2$, DHM, and ACAL for ($d=2$, Tsybakov $\alpha=1$ and $c_0=1$ $h^*=h_{0,25,0.75}$).}
\end{figure}

In Figure 4, we consider the same instance space $\mathcal X=[0,1]$ and uniformly distributed instances, but a hypothesis space $\mathcal H=\{h_{z_1,z_2}|0\le z_1,z_2\le 1\}$ consisting of all intervals $h_{z_1,z_2}=[z_1,z_2]$. Note that in this case, the VC dimension $d=2$.

Since the label complexity for ACAL is much larger than the others, we plot the others in the right figure. The significant reduction in label complexity offered by OLA and RW-OLA is evident from Figures 2-4. The simulated classification errors are near zero for all the algorithms.

Next we consider $d$-dimension homogeneous linear classification setting. Figure 5 and 6 show the comparison  for $d=3, 4$. DHM and $A^2$ are implemented with similar methods as discussed in Sec. \ref{sec:3.3}. The simulated classification errors are near zero for all three algorithms.

\begin{figure}[h]
\label{fig:5}
\centering
\includegraphics[scale=0.5]{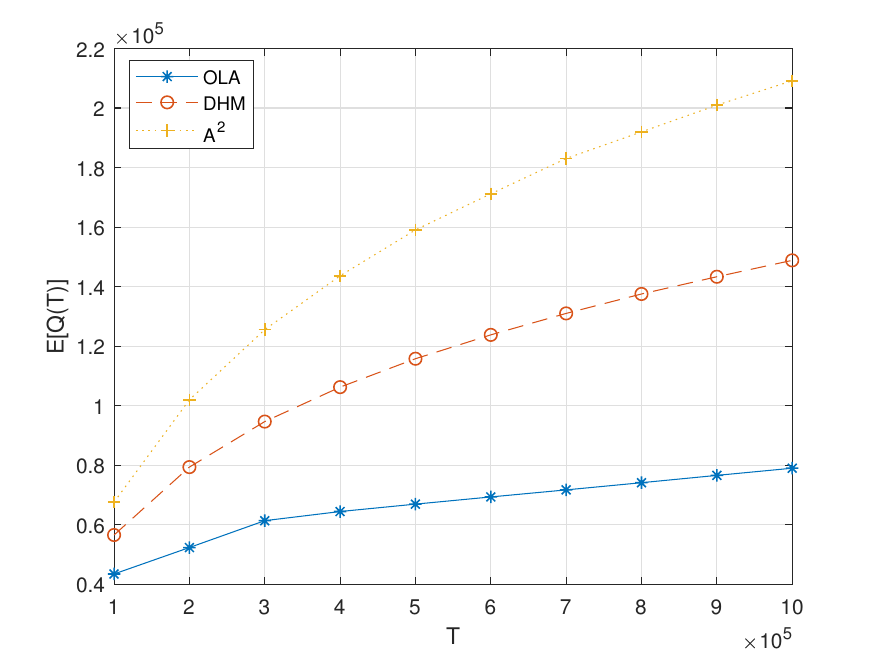} 
\includegraphics[scale=0.5]{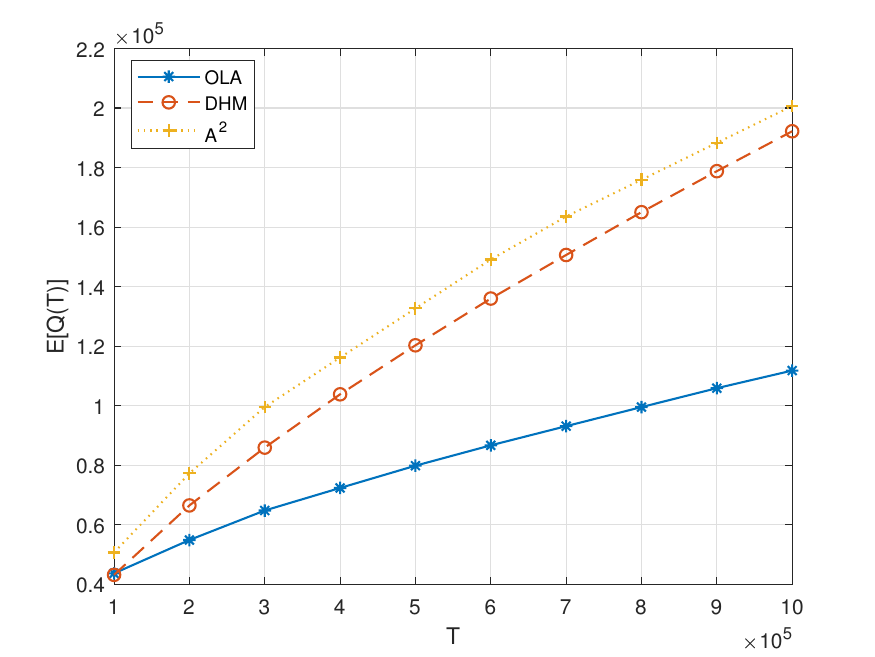}  
\caption{\small Comparison with $A^2$ and DHM for ($d=3$, $N=50000$, Tsybakov noise with $\alpha=1$ and $c_0=1$,  $\alpha=0.5$ and $c_0=5$, $h^*=(1,0,0)$).}
\end{figure}

\begin{figure}[!h]
\label{fig:4}
\centering
\includegraphics[scale=0.5]{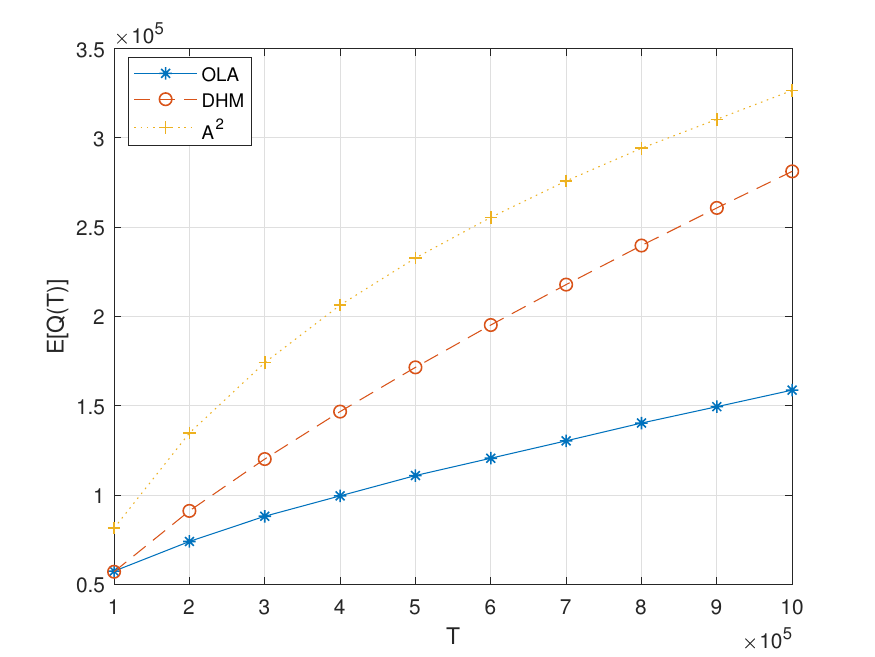} 
\includegraphics[scale=0.5]{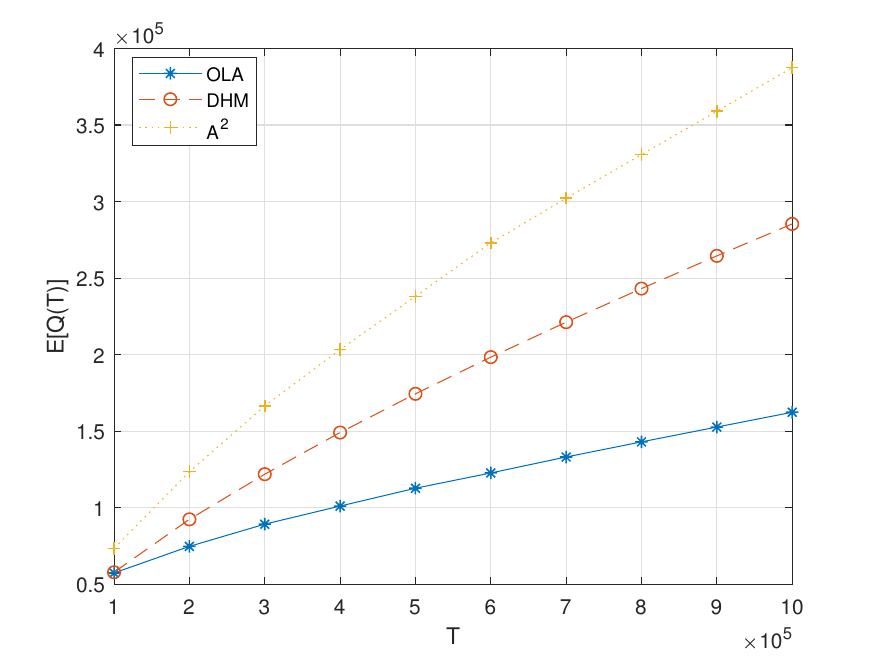}  
\caption{\small Comparison with $A^2$ and DHM for ($d=4$, $N=50000$, Tsybakov noise with  $\alpha=1$ and $c_0=1$,  $\alpha=0.5$ and $c_0=5$ $h^*=(1,0,0,0)$).}
\end{figure}

Next we compare OLA with the online margin-based algorithm CB-C-G proposed in \cite{cesa2003learning,cavallanti2009linear}. It is specialized in learning homogeneous separators under specific noise model: there exists a fixed and unknown vector $\mathbf u\in\mathbb R^d$ with Euclidean norm $||\mathbf u||$ = 1 such that $\eta(\mathbf x)=(1+\mathbf u^\top \mathbf x)/2$. Then, the Bayes optimal classifier $h^*(\mathbf x) = \mathbbm 1[\mathbf u^\top \mathbf x\ge 0]$. Shown in Figure 7 are the label complexity and classification error comparisons under this specific noise model with $d=2, \mathbf u=(1,0)$, and uniform $\mathbb P_X$.  It shows that even when comparing under this special setting, OLA offers considerable reduction in label complexity and drastic improvement in classification accuracy. This confirms with the assessment discussed in Sec.~\ref{sec:intro} that the more conservative disagreement-based approach is more suitable in the online setting than the more aggressive margin-based approach.  

\begin{figure}[!h]
\label{cbcg}
\centering
\includegraphics[scale=0.5]{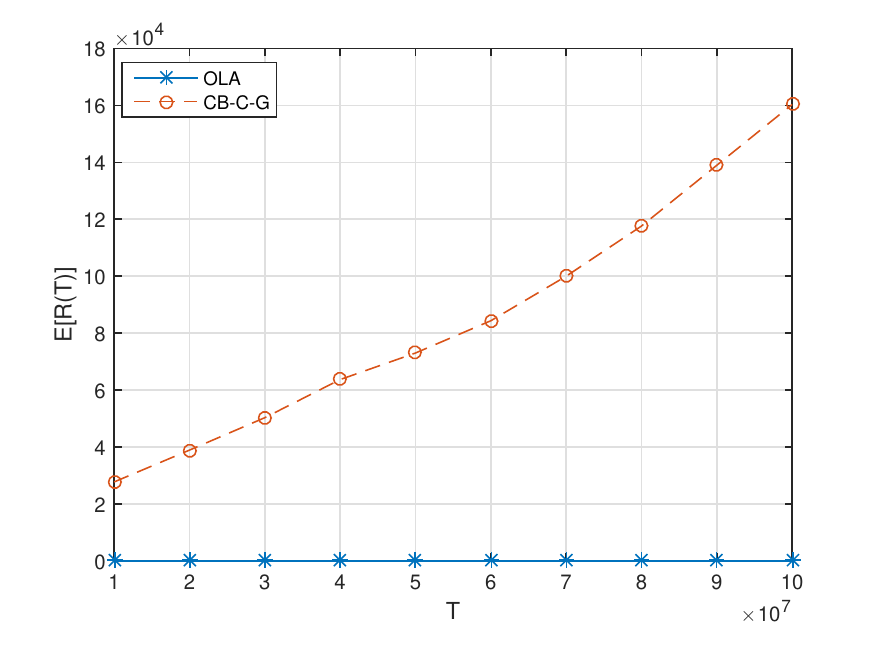}  
\includegraphics[scale=0.5]{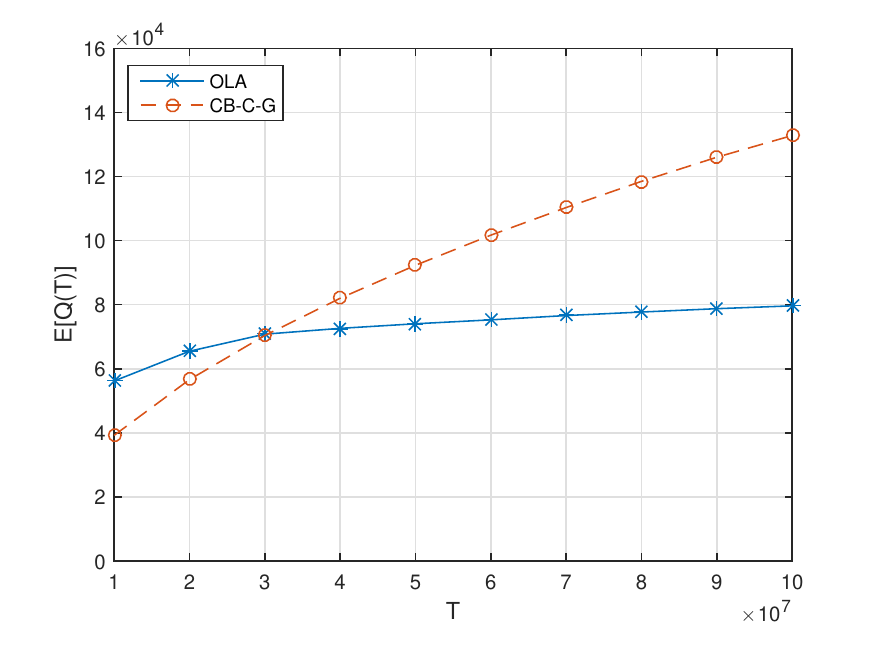}  
\caption{\small Comparison with CB-C-G~\cite{cesa2003learning} under Tsybakov noise with $\alpha=0.5$ and $c_0=1$.}
\end{figure}

\section{Conclusion}
\label{sec:conclusion}
Online active learning has received considerably  less attention than its offline counterpart. Real-time stream-based applications, however, necessitate a better understanding of this problem. The proposed algorithms and the established lower bounds in this work represent only initial attempts at addressing this problem. Much remains open. In particular, the characterization of the regret vs. label complexity tradeoff is incomplete,  and online learning algorithms that can operate at any given point on the tradeoff curve require further investigation. 



\newpage

\newpage

\section*{Appendix A: proof of Theorem~1}
First we introduce the following normalized uniform convergence VC bound [Vapnik and Chervonenkis, 2015].

\begin{lemma}
\label{lemma:VCbound}
Let $\mathcal F$ be a family of measurable functions $f:\mathcal X\times \mathcal Y\to \{0,1\}$. Let $\mathbb Q$ be a fixed distribution over $\mathcal X\times \mathcal Y$. Define
\begin{equation}
\mathbb Qf = \mathbb E_{X,Y\sim\mathbb Q} f(X,Y).
\end{equation}
For a finite set $\mathcal Z\subseteq\mathcal X\times \mathcal Y$, define
\begin{equation}
\mathbb Q_{\mathcal Z}f = \frac{1}{|\mathcal Z|}\sum_{(X,Y)\in\mathcal Z} f(X,Y)
\end{equation}
as the empirical average of $f$ over $\mathcal Z$. If $\mathcal Z$ is an i.i.d. sample of size $n$ from $\mathbb Q$, then, with probability at least $1-\delta$, for all $f\in\mathcal F$:
\begin{equation}
\label{eq:probVC}
\gamma_n\sqrt{\mathbb Q_\mathcal Z f}\le \mathbb Q f-\mathbb Q_\mathcal Z f\le\ \gamma_n^2+\gamma_n\sqrt{\mathbb Q_\mathcal Z f},
\end{equation}
where $\gamma_n=\sqrt{(4/n)\ln(8\mathcal S(\mathcal F,2n)/\delta)}$.
\end{lemma}
Define
\begin{equation}
\label{eq:fh1h2}
g^{h_1}_{h_2}(x,y)=\mathbbm 1[ h_1(x)\neq y\wedge h_2(x)=y],
\end{equation}
which is a measurable mapping from $\mathcal X \times \mathcal Y$ to $\{0,1\}. $ It is not hard to see that $\mathbb P g^{h_1}_{h_2} = \epsilon_{\mathbb P}(h_1,h_2)$ and $\mathbb P_\mathcal Z g^{h_1}_{h_2} = \epsilon_{\mathcal Z}(h_1,h_2)$.

Then, we apply the normalized VC bound [Vapnik and Chervonenkis, 2015] to a family of measurable functions $\mathcal F=\{g^{h_1}_{h_2}|h_1,h_2\in \mathcal H\}=\{g^{h_2}_{h_1}|h_1,h_2\in \mathcal H\}$ defined in~\eqref{eq:fh1h2} which gives us the following two inequalities:

\begin{equation}
\label{eq:20}
\epsilon_{\mathbb P}(h_1,h_2)-\epsilon_{\mathcal Z}(h_1,h_2) \le \gamma_n^2+\gamma_n\sqrt{\epsilon_{\mathcal Z}(h_1,h_2)}
\end{equation}
and
\begin{equation}
\label{eq:21}
\epsilon_{\mathcal Z}(h_2,h_1)-\epsilon_{\mathbb P}(h_2,h_1) \le \gamma_n\sqrt{\epsilon_{\mathcal Z}(h_2,h_1)}.
\end{equation}
Since 
\begin{equation}
\begin{aligned}
\mathbbm 1[ h_1(x)\neq y] - \mathbbm 1[ h_2(x)\neq y] = \mathbbm 1[ h_1(x)\neq y\wedge h_2(x)=y]-\mathbbm 1[ h_2(x)\neq y\wedge h_1(x)=y],
\end{aligned}
\end{equation}
we have $\epsilon_{\mathcal Z}(h_1)-\epsilon_{\mathcal Z}(h_2)=\epsilon_{\mathcal Z}(h_1,h_2)-\epsilon_{\mathcal Z}(h_2,h_1)$ 
and  $\epsilon_{\mathbb P}(h_1)-\epsilon_{\mathbb P}(h_2)=\epsilon_{\mathbb P}(h_1,h_2)-\epsilon_{\mathbb P}(h_2,h_1)$. Then, adding \eqref{eq:20} and  \eqref{eq:21} gives us the proof.

\section*{Appendix B: proof of Theorem~2}
First we show that if the inequalities in Corollary 1 hold simultaneously for all $k\ge 1$ , we have $h^*\in \mathcal H_k$ for all $k\ge 1$. This can be proved by induction as following: First, clearly $h^*\in\mathcal H_1$. Assume $h^*\in\mathcal H_k$, apply the inequality in Corollary 1 with $h=h^*$ we have
\begin{align}
\epsilon_{\mathbb P|\mathcal D_k}( h_k^*)-\epsilon_{\mathbb P|\mathcal D_k}(h^*) \le \epsilon_{\mathcal Z_k}( h_k^*)-\epsilon_{\mathcal Z_k}(h^*)+\Delta_{\mathcal Z_k}(h,h_k^*).
\end{align}
Note that $\epsilon_{\mathbb P|\mathcal D_k}(h^*)\le \epsilon_{\mathbb P|\mathcal D_k}( h_k^*)$. Therefore,
\begin{align}
\epsilon_{\mathcal Z_k}(h^*)-\epsilon_{\mathcal Z_k}( h_k^*) & \leq \Delta_{\mathcal Z_k}(h,h_k^*)+\epsilon_{\mathbb P|\mathcal D_k}(h^*)-\epsilon_{\mathbb P|\mathcal D_k}( h_k^*)\\
& \leq \Delta_{\mathcal Z_k}(h,h_k^*).
\end{align}
This indicates that $h^*\in \mathcal H_{k+1}$ by the querying rule of the online active learning algorithm. By Corollary 1, the inequalities hold simultaneously with probability at least $1-\frac {1}{2T}$. Therefore we have $\Pr(h^*\in\mathcal H_{k},\forall k\ge 1)\ge 1-\frac {1}{2T}$. By the labeling rule of the online active learning algorithm, $h^*\in\mathcal H_{k},\forall k\ge 1$ implies $R(T)=0$. Hence, $\Pr(R(T)=0)\ge 1-\frac {1}{2T}$.
Hence,
\begin{equation}
\mathbb E[R(T)]  \le \Pr(R(T)>0)\cdot T= \frac 1{2T} \cdot T= 1/2.
\end{equation}
as desired.
\vspace{-0.4cm}
\section*{Appendix C: proof of Theorem~3}
We separate the analysis into two stages. Let $k_t$ be epoch index at time $t$ and $\tau= \min \{ t:\phi(\mathcal D_{k_t}) < T^{-\frac{\alpha}{2-\alpha}}\}-1$ (let $\tau=T$ if there is no such $t$). We first bound the label complexity of the first stage $\mathbb E[Q(\tau)]$. Let
\begin{equation}
c=18^\alpha \theta c_0 m^{-\frac \alpha 2} < 1.
\end{equation}
By the definition of $\beta_M$ we can show that 
\begin{equation}
\frac{18}{\sqrt{m T^{\frac{2-2\alpha}{2-\alpha}}}}\ge 3\beta_M^2+3\sqrt 2\beta_{M}.
\end{equation}
Next we show that for all $k\le k_\tau$
\begin{equation}
\mathbb E[\phi(\mathcal D_{k+1})|\phi(\mathcal D_k)]\le  \left(\frac{1+c}{2} \right) \phi(\mathcal D_k).
\end{equation}
Let
\begin{equation}
\mathcal H_k^{\theta}= \left\{h\in\mathcal H_k,\rho(h,h^*)>\frac{c\phi(\mathcal D_k)}{\theta} \right\}.
\end{equation}
If $h\in\mathcal H_k^{\theta}$, then
\begin{equation}
\epsilon_{\mathbb P|{\mathcal D_k}}(h)-\epsilon_{\mathbb P|{\mathcal D_k}}(h^*)= \frac{\epsilon_{\mathbb P}(h)-\epsilon_{\mathbb P}(h^*)}{\phi(\mathcal D_k)}\ge\left (\frac{\rho(h,h^*)}{c_0}\right)^{\frac 1\alpha} \frac{1}{\phi(\mathcal D_k)}.
\end{equation}
Since $\phi(\mathcal D_{k}) < T^{-\frac{\alpha}{2-\alpha}}$ for all $k\le k_\tau$,  we have
\begin{equation}
\left (\frac{\rho(h,h^*)}{c_0}\right)^{\frac 1\alpha} \frac{1}{\phi(\mathcal D_k)}= \frac{18\phi(\mathcal D_k)^{\frac 1\alpha }}{\sqrt{m}}\frac{1}{\phi(\mathcal D_k)} \ge\frac{18}{\sqrt{m T^{\frac{2-2\alpha}{2-\alpha}}}}.
\end{equation}
 Thus by Corollary 1, we can conclude that
\begin{align*}
	\epsilon_{\cZ_k}(h) - \epsilon_{\cZ_k}(h_k^*) & \geq \epsilon_{\mathbb P|\cD_k}(h) - \epsilon_{\mathbb P|\cD_k}(h_k^*) - \beta_M^2 - \beta_M \left(\sqrt{\epsilon_{\cZ_k}(h, h_k^*)} + \sqrt{\epsilon_{\cZ_k}(h_k^*, h)} \right) \\
	& \geq \epsilon_{\mathbb P|\cD_k}(h) - \epsilon_{\mathbb P|\cD_k}(h^*) + \epsilon_{\mathbb P|\cD_k}(h^*) - \epsilon_{\mathbb P|\cD_k}(h_k^*) - \beta_M^2 - \sqrt{2}\beta_M  \\
	& > \frac{18}{\sqrt{mT^{\frac{2-2\alpha}{2-\alpha}}}} - \Delta_{\cZ_k}(h^*, h_k^*) - \beta_M^2 - \sqrt{2}\beta_M  \\
	& > \frac{18}{\sqrt{mT^{\frac{2-2\alpha}{2-\alpha}}}} -  2\beta_M^2 - 2\sqrt{2}\beta_M  \\
	& > \beta_M^2 + \sqrt{2}\beta_M  \\
	& > \Delta_{\cZ_k}(h, h_k^*)
\end{align*}
with probability $1-\frac 1{2T}$. This indicates that for all $h\in\mathcal H_k^\theta $, $h\notin \mathcal H_{k+1}$. By the definition of $\theta$, we have $\phi(\mathcal D_{k+1})\le\phi(\Psi(\mathcal H_k\setminus\mathcal H_k^{\theta}))\le\dfrac{c\phi(\mathcal D_k)}{\theta}\cdot \theta=c\phi(\mathcal D_k)$ with probability at least $1-\frac 1{2T}$. Therefore $\mathbb E[\phi(\mathcal D_{k+1})|\phi(\mathcal D_k)]\le  (\frac{1+c}{2}) \phi(\mathcal D_k)$ as desired. Furthermore, we have
\begin{equation}
\mathbb E[\phi(\mathcal D_{k})]\le \left(\frac{1+c}{2}\right)^k \phi(\mathcal D_0)= \left(\frac{1+c}{2}\right)^k .
\end{equation}

Define $S(t)=(\frac 2 {1+c})^{\frac {Q(t)}{M}}-(\frac 2 {1+c})[(\frac 2 {1+c})^{\frac 1M}-1]t$. Next we show that $S_t$ is a supermartingale.  Let $Q(t)$ denote the label complexity at time $t$. Since $Q(t)\le (k_t+1)M$, we have
\begin{align}
\Pr \left(Q(t+1)=Q(t)+1|S(1),S(2),\cdots,S(t)\right) & = \Pr (q_{t+1}=1|S(1),S(2),\cdots,S(t)) \nonumber \\
& = \mathbb E[\phi(\mathcal D_{k_t})|Q(t)] \nonumber \\
& = \mathbb E\left[\phi \left(\mathcal D_{\lfloor \frac{Q(t)}{M}\rfloor} \right)\right] \nonumber\\
& \leq \left(\frac{1+c}{2}\right)^{ \frac{Q(t)}{M}-1}.
\end{align}
Therefore,
\begin{equation}
\begin{aligned}
&\mathbb E[S(t+1)|S_1,\ldots,S(t)]\\
=&\mathbb E\left[ \left(\frac 2 {1+c}\right)^{\frac{Q(t)}{M}}-\left(\frac 2 {1+c}\right)\left(\left(\frac 2 {1+c}\right)^{\frac 1M}-1\right)t \bigg| S_1,\ldots,S(t)\right]\\
\le&  \left(\frac 2 {1+c}\right)^{\frac{Q(t)+1}{M}}\left(\frac{1+c}2\right)^{\frac{Q(t)}{M}-1} +\left(\frac 2 {1+c}\right)^{\frac{Q(t)}{M}}\left(1-\left(\frac{1+c}2\right)^{\frac{Q(1)}{M}-1}\right)\\
&-\left(\frac 2 {1+c}\right)\left[\left(\frac 2 {1+c}\right)^{\frac 1M}-1\right](t+1)\\
=&\left(\frac 2 {1+c}\right)^{\frac {Q(t)}{M}}-\left(\frac 2 {1+c}\right)\left[\left(\frac 2 {1+c}\right)^{\frac 1M}-1\right]t=S(t)
\end{aligned}
\end{equation}
as desired. Then by optional stopping theorem,
\begin{equation}
\begin{aligned}
&\mathbb E\left[\left(\frac 2 {1+c}\right)^{\frac {Q(\tau)}{M}}-\left(\frac 2 {1+c}\right)\left(\left(\frac 2 {1+c}\right)^{\frac 1M}-1\right)\tau\right]\\
=&\mathbb E[S(\tau)]\le \mathbb E[S(0)]=1.
\end{aligned}
\end{equation}
Since $\tau\le T$, we have
\begin{equation}
\mathbb{E}\left[\left(\frac 2 {1+c}\right)^{\frac {Q(\tau)}{M}}\right]\le \left(\frac 2 {1+c}\right)\left(\left(\frac 2 {1+c}\right)^{\frac 1M}-1\right)T+1
\end{equation}
Since $f(x)=\log x$ is concave, by Jensen's Inequality,
\begin{equation}
\mathbb E[Q(\tau)]\le M\log_{(\frac 2 {1+c})} \left[\left(\frac 2 {1+c}\right)\left(\left(\frac 2 {1+c}\right)^{\frac 1M}-1\right)T+1\right].
\end{equation}
Since $M\le mdT^{\frac{2-2\alpha}{2-\alpha}}\log T+1$, $c<1$ and $(\frac 2 {1+c})((\frac 2 {1+c})^{\frac 1M}-1)<2$, we have
\begin{equation}
\mathbb E[Q(\tau)]\le \frac{2mdT^{\frac{2-2\alpha}{2-\alpha}}}{\log \frac{2}{1+c}} (\log T+1)^2=O(dT^{\frac{2-2\alpha}{2-\alpha}}\log ^2T)
\end{equation}
%
%
By definition of $\tau$ we have $q_t \le T^{-\frac{\alpha}{2-\alpha}} , \forall t>\tau$ . Therefore
\begin{equation}
\begin{aligned}
\mathbb E[Q(T)] &= \mathbb E[Q(\tau)] + \mathbb{E}\left[\sum_{t=\tau+1} q_t  |\tau\right] \\
&\le \frac{2mdT^{\frac{2-2\alpha}{2-\alpha}}}{\log \frac{2}{1+c}} (\log T+1)^2 + T^{-\frac{\alpha}{2-\alpha}} \cdot T \\
&\le  \left(\frac{2md}{\log \frac{2}{1+c}} +1\right)T^{\frac{2-2\alpha}{2-\alpha}}(\log T+1)^2
\end{aligned}
\end{equation}
as desired.

\section*{Appendix D: Proof of Theorem~\ref{thm:lower_bound}}


For ease of understanding we present the main proof for the case when the instance space, $\cX$, is the interval $[0,1]$ and the hypothesis space, $\cH$, is the class of threshold classifiers. The extension to general hypothesis spaces is straightforward and is described at the end of the proof. We begin the proof by considering the expression for regret for a (possibly randomized) policy $\pi$,
\begin{align}
	\mathbb{E}[R_{\pi}(T)] & = \mathbb{E} \left[ \sum_{t = 1}^T \1 \{q_t = 0\}(\1 \{\lambda_t \neq Y_t \} - \1 \{h^{\ast}(X_t) \neq Y_t \}) \right] \nonumber \\
	 & = \mathbb{E} \left[ \sum_{t = 1}^T \1 \{q_t = 0\} \mathbb{E}_{Y_t} \left[(\1 \{\lambda_t \neq Y_t \} - \1 \{h^{\ast}(X_t) \neq Y_t \})\right] \right] \nonumber\\
	 & = \mathbb{E}\left[ \sum_{t = 1}^T \1 \{q_t = 0\} 2 \left| \eta(X_t) - \frac{1}{2} \right| \1 \{\lambda_t \neq h^{\ast}(X_t) \} \right]  \nonumber\\
	 & =  2 \sum_{t = 1}^T \mathbb{E}\left[ \left| \eta(X_t) - \frac{1}{2} \right| \Pr(q_t = 0| I_{t-1} ) \Pr (\lambda_t \neq h^{\ast}(X_t) | I_{t-1}, q_t = 0) \right] \label{eq:lb_proof_1}
\end{align}
where $I_t$ denotes the information vector up to time $t$. It contains all the information obtained up to and including time $t$ in terms of observed instances and their labels along with any additional information pertaining to the algorithm up to time $t$. We now focus on bounding the probability of making an error. 

To bound the probability of making a mistake, we restrict ourselves to a subproblem which is easier to analyze. Fix a $\mu > 0$ and consider a pair of hypotheses $h_1, h_2$ such that $\rho(h_1, h^*) = \mu$, $\rho(h_2, h^*) = \mu/2$ and $\rho(h_1, h_2) = \mu/2$. For example, when the distribution is uniform and if a hypothesis is represented by a point in the interval, then a possible option is $h_1 = h^* - \mu$ and $h_2 = h^* - \mu/2$. For the analysis, we would only consider the regret incurred by $\pi$ in the region $\DIS(h_1, h_2)$ where $\DIS(h, h') = \{ x : h(x) \neq h'(x) \}$. This is a subset of the instance space and consequently the total regret incurred by $\pi$ would be at least the regret it incurs on the instances in this region. 

To lower bound the probability of error when policy $\pi$ labels a point, we consider the subproblem of distinguishing between $h_1$ and $h^*$. More specifically, given certain number of labeled instances in $\DIS(h_1, h^*)$ we bound the probability that \emph{any} randomized policy would be unable to identify the correct hypothesis between $h_1$ and $h^*$ using the labeled instances. The motivation is that if $\pi$ incorrectly concludes that $h_1$ is the true hypothesis and proceeds to label a point in $\DIS(h_1, h_2)$ then such a labeling event would contribute to regret. And since $\pi$ is a policy that performs uniformly well for all $\mathbb{P}_{Y|X}$, it would have to distinguish between $h_1$ and $h^*$ during the course of learning and the therefore the regret incurred by the policy is at least as much as it incurs in trying to distinguish these two hypotheses.  

Since we are focusing on the binary hypothesis problem of distinguishing between $h_1$ and $h^*$, we can restrict ourselves to the instances observed in $\DIS(h_1, h^*)$. Recall that we are considering threshold classifiers and therefore in the disagreement region between two hypotheses, one of them will label all the points to be $+1$ while the other would label all of them to be $-1$. Consequently, the problem of identifying the correct hypothesis between $h_1$ and $h^*$ is equivalent to the problem of identifying the parameter of a Bernoulli random variable. The following technical lemma from Anthony and Bartlett~\cite{anthony_bartlett_1999}, which provides a lower bound on error in estimating the parameter of a Bernoulli random variable, would be useful in further analysis.
\begin{lemma}
	Suppose that $\alpha$ is a random variable that is uniformly distributed on $\{\alpha_{+}, \alpha_{-}\}$ where $\alpha_{-} = \dfrac{1}{2} - \dfrac{\gamma}{2}$ and $\alpha_{+} = \dfrac{1}{2} +\dfrac{\gamma}{2}$ with $\gamma	\in (0,1)$. Suppose that $(\xi_1, \xi_2, \dots, \xi_m)$ are i.i.d. $\{0,1\}$ random variables with $\Pr(\xi_1 = 1) = \alpha$ for all $i$. Let $f$ be a function, possibly randomized, from $\{0,1\}^m \to \{\alpha_{+}, \alpha_{-}\}$, then we have 
	\begin{align*}
		\Pr(f(\xi_1, \xi_2, \dots, \xi_m) \neq \alpha) & > \frac{1}{4} \left( 1 - \sqrt{1 - \exp\left( \frac{-2\lceil m/2 \rceil \gamma^2}{1 - \gamma^2}\right)} \right) \\
		& > \frac{1}{8} \exp\left( \frac{-2\lceil m/2 \rceil \gamma^2}{1 - \gamma^2}\right)
	\end{align*}	 
\end{lemma}

We cannot directly apply lemma since our setup is slightly different from the one in the lemma. It is not difficult to see that in our case, the labels observed are independent but not identically distributed. However, it is straightforward to slightly tweak the proof of the above lemma to incorporate this condition. Let $m_t$ denote the number of instances queried by the policy $\pi$ up to (but not including) time instant $t$ in $\DIS(h_1, h^*)$ corresponding to points $X_1, X_2, \dots, X_{m_t}$. WLOG we can assume that $h^*$ labels these points as $-1$. The proof of the above lemma argues that probability of error is at least the probability of observing greater than $m_t/2$ points labeled $+1$. The probability of this event is more than that of observing greater than $m_t/2$ instances of $+1$ in an i.i.d. sample of size $m_t$ of a Bernoulli random variable $Z$, where $\Pr(Z = +1) = 1/2 - \gamma_{\max}/2$ where $\gamma_{\max} = 2 \max_{x \in \DIS(h_1, h^*)} |\eta(x) -  1/2|$. This follows from the fact that $\eta(x) \geq 1/2 - \gamma_{\max}/2$ for all $x \in \DIS(h_1, h^*)$. Therefore, we can lower bound the probability of interest by bounding the probability of error in estimating the parameter for $Z$ for which we can directly use the lemma. Using the result from the lemma and plugging it back into~\eqref{eq:lb_proof_1}, we get, 
\begin{align}
	\mathbb{E}[R_{\pi}(T)] & \geq 2 \sum_{t = 1}^T \mathbb{E}\left[ \left| \eta(X_t) - \frac{1}{2} \right| \Pr(q_t = 0| I_{t-1} ) \1 \{ X_t \in \DIS(h_1, h_2) \} \exp\left( \frac{-2\lceil m_t/2 \rceil \gamma_{\max}^2}{1 - \gamma_{\max}^2}\right)\right] \label{eq:lb_proof_2}
\end{align}
From the assumptions on the noise model, we can conclude that $\displaystyle \min_{x \in \DIS(h_1, h_2)} |\eta(x) - 1/2| \geq 0.5c_2' \mu^{1/\alpha - 1}$for some universal constant $c_2' > 0$. Also, if $Q(T)$ is random variable corresponding to the total number of queries by the policy $\pi$ until time $T$, then we have the trivial inequality $m_t \leq Q(T)$. Using these observations, we can rewrite~\eqref{eq:lb_proof_2} as,
\begin{align}
	\mathbb{E}[R_{\pi}(T)] & \geq \frac{c_2'}{8} \mu^{1/\alpha - 1} \mathbb{E} \left[ \sum_{t = 1}^T  \1 \{ q_t = 0, X_t \in \DIS(h_1, h_2) \} \exp\left( \frac{-2\lceil Q(T)/2 \rceil \gamma_{\max}^2}{1 - \gamma_{\max}^2}\right)\right] \label{eq:lb_proof_3}
\end{align}

To further bound the expression on RHS, we consider the number of instances that arrive in the $\DIS(h_1, h_2)$. Let $N$ be the random number of observed instances which belong to $\DIS(h_1, h_2)$. To bound the above expression we consider the event $N \geq \mu T/2$. From the results in~\cite{Greenberg2014}, we can conclude that the probability of such an event is at least $1/4$. To evaluate the expression involving $Q(T)$, we consider the event such that $Q(T) \leq \gamma_{\max}^{-2}$. Under this event, it is not difficult to note that the following holds $\displaystyle \exp\left( \frac{-2\lceil Q(T)/2 \rceil \gamma_{\max}^2}{1 - \gamma_{\max}^2}\right) \geq c_3$ for a universal constant $c_3 > 0$. Combining these two points with~\eqref{eq:lb_proof_3}, we get,
\begin{align}
	\mathbb{E}[R_{\pi}(T)] & \geq \frac{c_2'}{8} \mu^{1/\alpha - 1} \mathbb{E}\left[ (N - Q(T)) \exp\left( \frac{-2\lceil Q(T)/2 \rceil \gamma_{\max}^2}{1 - \gamma_{\max}^2}\right)\right] \\
	& \geq \frac{c_2'}{8} \mu^{1/\alpha - 1} \mathbb{E} \left[ \1 \{ N \geq \mu T/2\} \1 \{ Q(T) \leq \gamma_{\max}^{-2}\} (\rho T/2 - \gamma_{\max}^{-2}) c_3\right] \\
	& \geq \frac{c_3'}{32} \mu^{1/\alpha - 1} (\mu T/2 - \gamma_{\max}^{-2})  \Pr ( Q(T) \leq \gamma_{\max}^{-2}) 
\end{align}

To bound the probability on RHS, we use the constraint that the regret incurred by $\pi$ has to be bounded.  Notice that if the expression $\mu^{1/\alpha - 1} (\mu T/2 - \gamma_{\max}^{-2})$ is an increasing function of $T$, that is if $\mu^{1/\alpha - 1} (\mu T/2 - \gamma_{\max}^{-2}) \sim T^{\varepsilon}$ for some $\varepsilon > 0$, then we would have $\Pr ( Q(T) \leq \gamma_{\max}^{-2}) \sim T^{-\varepsilon}$ since the regret incurred by $\pi$ is bounded by a constant. More generally, if the allowed regret budget of policy $\pi$, denoted by $R_0(T)$, is sublinear function of $\mu^{1/\alpha - 1} (\mu T/2 - \gamma_{\max}^{-2})$, then $\Pr (Q(T) \leq \gamma_{\max}^{-2}) \lesssim T^{-\varepsilon}$ for some $\varepsilon > 0$. This implies that the probability that such a policy queries less than $\gamma_{\max}^{-2}$ samples would be small and using Markov's inequality we can conclude that $\mathbb{E}[Q(T)]$ would be $\Omega(\gamma_{\max}^{-2})$. From the assumption on the noise model we have $\gamma_{\max} \leq c_1 \mu^{\frac{1}{\alpha} - 1}$. Therefore, to obtain the tightest lower bound on expected label complexity, we find the smallest value of $\mu$ which ensures that $\mu^{1/\alpha - 1} (\mu T/2 - c_1 \mu^{2- \frac{2}{\alpha}})$ is an increasing function of $T$. On solving the equation, we get $\mu \sim T^{-\frac{\alpha}{2 - \alpha}}$ and consequently, $\mathbb{E}[Q(T)] \geq \Omega(T^{\frac{2 - 2\alpha}{2 - \alpha}})$ as required. To evaluate the lower bound for different regret budget $R_0(T)$, we just need to evaluate the smallest $\mu$ that would allow $R_0(T)$ to be sublinear in $\mu^{1/\alpha - 1} (\mu T/2 - c_1 \mu^{2- \frac{2}{\alpha}})$ and then the corresponding $\gamma_{\max}$ to obtain the bound.

It is not difficult to see that the assumption of $\cH$ being the class of threshold classifiers was not crucial to the fundamental idea of the proof and it was primarily taken to simplify the description of the hypothesis pair $h_1$ and $h_2$. The proof can be extended to general hypothesis classes by considering $h_1$ and $h_2$ such that $\Pr(\DIS(h_1, h^*)) = \mu$, $\Pr(\DIS(h_2, h^*)) = c\mu$ for $c < 1$ and given $\mu > 0$ with $\DIS(h_2, h^*) \subseteq \DIS(h_1, h^*)$. Additionally, $h_1$ and $h_2$ also satisfy $\Pr(\{x : h^*(x) = v \ \land \ h_1(x) = -v \}) = c_1 \mu$ and $\Pr(\{x : h^*(x) = v \ \land \ h_2(x) = -v \}) = c_2 \mu$ for $c_1 > c_2 > 0$ and some $v \in \{-1, 1 \}$. All the arguments in the proof follow exactly for a pair of hypotheses $h_1$ and $h_2$ that are chosen in aforementioned manner and consequently, the lower bound $\mathbb{E}[Q(T)] \geq \Omega(T^{\frac{2 - 2\alpha}{2 - \alpha}})$ holds for general hypothesis classes.

For the case of Massart noise, the proof follows a similar idea. From the condition on the Massart Noise, we have, $| \eta(x) - 1/2 | \geq \gamma_0 /2$ for all $x \in \mathcal{X}$. Since the slope, i.e. $| \eta(x) - 1/2 |$, ``jumps'' at the boundary, we do not need to consider $h_2$ at all. We just consider $h_1$ and $h^*$. Let $h_1$ be such that $\DIS(h_1, h^*) = \mu$ for some $\mu > 0$ and $\gamma_{\max} < 1$ where $\gamma_{\max} = 2 \max_{x \in \DIS(h_1, h^*)} |\eta(x) -  1/2|$, both of which are independent of $T$. Therefore, we can rewrite~\eqref{eq:lb_proof_2} as, 
\begin{align}
	\mathbb{E}[R_{\pi}(T)] & \geq 2 \sum_{t = 1}^T \mathbb{E}\left[ \left| \eta(X_t) - \frac{1}{2} \right| \Pr(q_t = 0| I_{t-1} ) \1 \{ X_t \in \DIS(h_1, h^*) \} \exp\left( \frac{-2\lceil m_t/2 \rceil \gamma_{\max}^2}{1 - \gamma_{\max}^2}\right)\right] \nonumber\\
	& \geq \gamma_0 \sum_{t = 1}^T \mathbb{E}\left[ \Pr(q_t = 0| I_{t-1} ) \1 \{ X_t \in \DIS(h_1, h^*) \} \exp\left( \frac{-2\lceil m_t/2 \rceil \gamma_{\max}^2}{1 - \gamma_{\max}^2}\right)\right] .
\end{align}
Again, letting $N$ denote the number of instances observed in $\DIS(h_1, h^*)$ and $Q(T)$ being the query complexity, we can rewrite the above equation as
\begin{align}
	\mathbb{E}[R_{\pi}(T)] & \geq \gamma_0 \sum_{t = 1}^T \mathbb{E}\left[ (N - Q(T)) \exp\left( \frac{-2\lceil Q(T)/2 \rceil \gamma_{\max}^2}{1 - \gamma_{\max}^2}\right)\right],
\end{align}
where we use the trivial inequality $m_t \leq Q(T)$. Considering the event $N \geq \mu T$, we can write the above equation as
\begin{align}
	\mathbb{E}[R_{\pi}(T)] & \geq \frac{\gamma_0}{4} \frac{- 2 \gamma_{\max}^2}{1 - \gamma_{\max}^2} \mathbb{E}\left[ \mu T- Q(T) \exp\left( \frac{- Q(T) \gamma_{\max}^2}{1 - \gamma_{\max}^2}\right)\right].
\end{align}
Using Jensen inequality and noting that $xe^{-ax} \leq (ae)^{-1}$ for all $x \geq 0$, we can rearrange the above equation as
\begin{align}
	\frac{\mathbb{E}[R_{\pi}(T)]}{c_{\gamma}} + \frac{1 - \gamma_{\max}^2}{e \gamma_{\max}^2} & \geq  T \exp\left( \frac{- \mathbb{E}[Q(T)] \gamma_{\max}^2}{1 - \gamma_{\max}^2}\right),
\end{align}
where $\displaystyle c_{\gamma} = \frac{\gamma_0}{4} \frac{- 2 \gamma_{\max}^2}{1 - \gamma_{\max}^2}$. Noting that $\mathbb{E}[R_{\pi}(T)]$ is a sublinear function of $T$, i.e., $\mathbb{E}[R_{\pi}(T)]$ is $O(T^{\varepsilon})$ for some $\varepsilon \in (0,1)$, we can rearrange the above equation to obtain $\mathbb{E}[Q(T)]$ is $\Omega(\log T)$, as required.

\section*{Appendix E: proof of Theorem~\ref{thm:lc_RW_OLA}}
We first introduce the convergence VC bound~\cite{vapnik2015uniform} to establish the relationship between the empirical error and true error rate of any hypothesis $h$. 
\begin{lemma}~\cite{vapnik2015uniform}
\label{lemma:vc} 
Let $\mathcal Z$ bet a set of $M$ i.i.d. (X,Y)-samples under distribution $\mathbb P$. For all $h\in \mathcal H$, we have, with probability at least $1-\delta$,
\begin{equation}
|\epsilon_P(h)  - \epsilon_{\mathcal Z}(h)| \le \Delta(M,\delta)
\end{equation}
\end{lemma}
%
%
%
Define
$$
d(\mathcal H_k)=\min_{h\notin \mathcal H_k} \epsilon_{\mathbb P_{\mathcal D_{r(k)}}}(h) - \min_{h\in \mathcal H_k} \epsilon_{\mathbb P_{\mathcal D_{r(k)}}}(h)
$$
which captured the hardness of the hypothesis testing problem.

\begin{definition}
Define a random process $W_k$ for each epoch $k$ as following: 
\begin{enumerate}
\item Let $W_1 = 0$
\item If $d(\mathcal H_k)\ge 4\Delta(M,1-\sqrt p)$:
\begin{enumerate}
\item If the verification passed with $d(\mathcal H_{k+1})\ge 4\Delta(M,1-\sqrt p)$, let $W_{k+1}=W_k+1$
\item If the verification passed with $d(\mathcal H_{k+1})< 4\Delta(M,1-\sqrt p)$,  let $W_{k+1}=W_k-1$
\item If the verification failed, let $W_{k+1} = W_k-1$
\end{enumerate}
\item If $0\le d(\mathcal H_k)<4\Delta(M,1-\sqrt p)$:
\begin{enumerate}
\item If the verification passed with $d(\mathcal H_{k+1})\ge 4\Delta(M,1-\sqrt p)$, let $W_{k+1}=W_k+1$
\item If the verification passed with $d(\mathcal H_{k+1})< 4\Delta(M,1-\sqrt p)$,  let $W_{k+1}=W_k-1$
\item If the verification failed, let $W_{k+1} = W_k+1$
\end{enumerate}
\item If $d(\mathcal H_k) <0$:
\begin{enumerate}
\item If the verification passed, let $W_{k+1}=W_k-1$
\item If the verification failed, let $W_{k+1} = W_k+1$
\end{enumerate}
\end{enumerate}
\end{definition}

\begin{lemma}
For all epoch $k$, we have
$$
\Pr(W_{k+1} = W_k + 1) \ge p
$$
\end{lemma}

\begin{proof}  

Based on how $W_{k+1}$ is defined, we consider following three cases:

\begin{enumerate}
\item If $d(\mathcal H_k)\ge 4\Delta(M,1-\sqrt p)$:

First we show that the probability that the algorithm zooms in is at least $p$.
By definition of $d(\mathcal H_k)$, we have $\min_{h\notin \mathcal H_k} \epsilon_{\mathbb P_{\mathcal D_{r(k)}}}(h) - \min_{h\in \mathcal H_k} \epsilon_{\mathbb P_{\mathcal D_{r(k)}}}(h)\ge 4\Delta(M,1-\sqrt p)$. By Lemma~\ref{lemma:vc}, with probability at least $\sqrt p$, we have
\begin{equation}
|\epsilon_P(h)  - \epsilon_{\mathcal Z}(h)| \le \Delta(M,1-\sqrt p)
\end{equation}
for all $h\in \mathcal H$. Therefore,
\begin{equation}
\min_{h\notin\mathcal H_k} \epsilon_{\mathcal Z_k'} (h) \ge \min_{h\notin \mathcal H_k} \epsilon_{\mathbb P_{\mathcal D_{r(k)}}}(h) - \Delta(M,1-\sqrt p)
\end{equation}
and
\begin{equation}
\min_{h\in\mathcal H_k} \epsilon_{\mathcal Z_k'} (h) \le \min_{h\in \mathcal H_k} \epsilon_{\mathbb P_{\mathcal D_{r(k)}}}(h) + \Delta(M,1-\sqrt p)
\end{equation}
Hence
\begin{equation}
\min_{h\notin\mathcal H_k} \epsilon_{\mathcal Z_k'} (h) - \min_{h\in\mathcal H_k} \epsilon_{\mathcal Z_k'} (h) \ge 4\Delta(M,1-\sqrt p) - 2\Delta(M,1-\sqrt p) = 2\Delta(M,1-\sqrt p)
\end{equation}
Therefore, the probability that the algorithm zooms in is at least $\sqrt p$ as desired. Then, by applying Lemma~\ref{lemma:vc} again, we have
\begin{equation}
\begin{aligned}
\min_{h\notin \mathcal H_{k+1}}\epsilon_{\mathbb P_{\mathcal D_{k}}} (h) \ge &\min_{h\notin \mathcal H_{k+1}}\epsilon_{\mathcal Z_k} (h) -\Delta(M,1-\sqrt p) \\
\ge& \epsilon_{{\mathcal Z_{k}}} (h_k^*) +6\Delta(M,1-\sqrt p)  - \Delta(M,1-\sqrt p) \\
\ge & \epsilon_{\mathbb P_{\mathcal D_{k}}} (h_k^*) - \Delta(M,1-\sqrt p) + 5\Delta(M,1-\sqrt p) \\
\ge & \epsilon_{\mathbb P_{\mathcal D_{k}}} (h^*) + 4\Delta(M,1-\sqrt p)
\end{aligned}
\end{equation}
with probability at least $\sqrt p$. Therefore, $\Pr(W_{k+1}=W_k+1|d(\mathcal H_k)\ge 4\Delta(M,1-\sqrt p)) \ge (\sqrt p)^2= p$.

\item If $0\le d(\mathcal H_k)<4\Delta(M,1-\sqrt p)$:

Since $W_{k+1} = W_k+1$ when the algorithm zooms out, it suffices to show that when the algorithm zooms in, $d(\mathcal H_{k+1})\ge 4\Delta(M,1-\sqrt p)$ with probability at least $\sqrt p$. By applying Lemma~\ref{lemma:vc}, we have
\begin{equation}
\begin{aligned}
\min_{h\notin \mathcal H_{k+1}}\epsilon_{\mathbb P_{\mathcal D_{k}}} (h) \ge &\min_{h\notin \mathcal H_{k+1}}\epsilon_{\mathcal Z_k} (h) -\Delta(M,1-\sqrt p) \\
\ge& \epsilon_{\mathbb P_{\mathcal Z_{k}}} (h_k^*) +6\Delta(M,1-\sqrt p)  - \Delta(M,1-\sqrt p) \\
\ge & \epsilon_{\mathbb P_{\mathcal D_{k}}} (h_k^*) - \Delta(M,1-\sqrt p) + 5\Delta(M,1-\sqrt p) \\
\ge & \epsilon_{\mathbb P_{\mathcal D_{k}}} (h^*) + 4\Delta(M,1-\sqrt p)
\end{aligned}
\end{equation}
with probability at least $\sqrt p$.
Therefore, $\Pr(W_{k+1}=W_k+1|0\le d(\mathcal H_k)<4\Delta(M,1-\sqrt p)) \ge \sqrt p \ge p$.

\item If $d(\mathcal H_k)<0$:

It suffices to show that the probability that the algorithm zooms out is at least $\sqrt p$. By definition, we have
\begin{equation}
\min_{h\notin \mathcal H_k} \epsilon_{\mathbb P_{\mathcal D_{r(k)}}}(h) < \min_{h\in \mathcal H_k} \epsilon_{\mathbb P_{\mathcal D_{r(k)}}}(h)
\end{equation}
Therefore, by applying Lemma~\ref{lemma:vc}, with probability at least $\sqrt p$, we have
\begin{equation}
\min_{h\notin\mathcal H_k} \epsilon_{\mathcal Z_k'} (h) \le \min_{h\notin \mathcal H_k} \epsilon_{\mathbb P_{\mathcal D_{r(k)}}}(h) + \Delta(n,1-p)
\end{equation}
and
\begin{equation}
\min_{h\in\mathcal H_k} \epsilon_{\mathcal Z_k'} (h) \ge \min_{h\in \mathcal H_k} \epsilon_{\mathbb P_{\mathcal D_{r(k)}}}(h) - \Delta(n,1-p).
\end{equation}
Consequently, we have
\begin{equation}
\min_{h\notin\mathcal H_k} \epsilon_{\mathcal Z_k'} (h) - \min_{h\in\mathcal H_k} \epsilon_{\mathcal Z_k'} (h) \le  2\Delta(M,1-\sqrt p).
\end{equation}
Therefore, $\Pr(W_{k+1}=W_k+1|0\le d(\mathcal H_k)<0) \ge \sqrt p \ge p$
\end{enumerate}
Based on the three cases considered, we have $\Pr(W_{k+1}=W_k+1) \ge p$ as desired.
\end{proof}

Since either $W_{k+1} = W_k+1$ or $W_{k+1}=W_k-1$, $W_k$ is a random walk process with positive bias at least $p$.

\begin{lemma}
\label{lemma:condition}
There exists $c_2<1$ such that
$$
\mathbb E[\phi(\mathcal D_{r(k)}) |W_k=w] \le c_2^w
$$
\end{lemma}
\begin{proof}
First we show that $ E[\phi(\mathcal D_{k+1})| \phi(\mathcal D_{k}), r(k+1)=k, h^*\in\mathcal H_k] \le c \phi(\mathcal D_{k})$.
Let
\begin{equation}
c = 32 \theta c_0 m^{-\frac 1 2} < 1.
\end{equation}
Let
\begin{equation}
\mathcal H_k^{\theta}=\left\{h\in\mathcal H_k,\rho(h,h^*)>\frac{c\phi(\mathcal D_k)}{\theta}\right\}.
\end{equation}
By the definition of $\Delta(M,1-\sqrt p)$ we can show that 
\begin{equation}
\frac{32}{\sqrt{m }}\ge 8\Delta(M,1-\sqrt p).
\end{equation}
If $h\in\mathcal H_k^{\theta}$, then
\begin{equation}
\epsilon_{\mathbb P|{\mathcal D_k}}(h)-\epsilon_{\mathbb P|{\mathcal D_k}}(h^*)= \frac{\epsilon_{\mathbb P}(h)-\epsilon_{\mathbb P}(h^*)}{\phi(\mathcal D_k)}\ge\frac{\rho(h,h^*)}{c_0\phi(\mathcal D_k)}.
\end{equation}
Consequently, we have
\begin{equation}
 \frac{\rho(h,h^*)}{c_0} \frac{1}{\phi(\mathcal D_k)}= \frac{32\phi(\mathcal D_k)}{\sqrt{m}}\frac{1}{\phi(\mathcal D_k)} \ge\frac{32}{\sqrt{m }}.
\end{equation}
 Thus by Lemma~\ref{lemma:vc}, we can conclude that
\begin{align*}
	\epsilon_{\cZ_k}(h) - \epsilon_{\cZ_k}(h_k^*) & \geq \epsilon_{\cZ_k}(h) - \epsilon_{\cZ_k}(h^*) \\
& \geq \epsilon_{\mathbb P|\cD_k}(h) - \epsilon_{\mathbb P|\cD_k}(h^*) - 2\Delta(M,1-\sqrt p) \\
	& > \frac{32}{\sqrt{m }}  - 2\Delta(M,1-\sqrt p)  \\
	& > 6\Delta(M,1-\sqrt p)  
\end{align*}
with probability $p$. This indicates that for all $h\in\mathcal H_k^\theta $, $h\notin \mathcal H_{k+1}$ when zoomed in. By the definition of $\theta$, we have $\phi(\mathcal D_{k+1})\le\phi(\Psi(\mathcal H_k\setminus\mathcal H_k^{\theta}))\le\dfrac{c\phi(\mathcal D_k)}{\theta}\cdot \theta=c\phi(\mathcal D_k)$ with probability at least $p$. Therefore $ E[\phi(\mathcal D_{k+1})| \phi(\mathcal D_{k}), r(k+1)=k] \le c_2 \phi(\mathcal D_{k})$ as desired, where $c_2 = c_0p+1-c_0<1$.

For each $k$, we can find a sequence of $(i_1,i_2,\cdots,i_{j_k},k)$ where $r(i_{l+1})=i_l$ for $l=1,2,\cdots,i_{j_k}$ and $r(k)=i_{j_k}$. Let $j^*(k)$ be the largest integer such that $h^*\in \mathcal H_{i_{j^*(k)}}$. Then, by the definition of $W_k$, we have $j^*(k)\ge W_k$. Note that $\mathcal H_{i_{l+1}} \subseteq \mathcal H_{i_{l}}$ for all $l=1,2,\cdots,i_{j_k}$, we can apply the inequality $\mathbb E[\phi(\mathcal D_{k+1})| \phi(\mathcal D_{k}), r(k+1)=k ,h^*\in\mathcal H_k] \le c_2\phi(\mathcal D_{k})$ $j^*(k)$ times, which gives us
$$
\mathbb E[\phi(\mathcal D_{r(k)}) |W_k=w] \le \mathbb E[\phi(\mathcal D_{i_{j^*(k)}}) |W_k=w]  \le c_2^{j^*(k)} \le c_2^w
$$
as desired.
\end{proof}

\begin{lemma}
There exists $c_3<1$ such that
\label{lemma:tower}
$$
\mathbb E[\phi(\mathcal D_{r(k)})] \le c_4\cdot c_3^k
$$
\end{lemma}

\begin{proof}
By tower property and Lemma~\ref{lemma:condition}, we have
\begin{equation}
\mathbb E[\phi(\mathcal D_{r(k)})] = \mathbb E[\mathbb E[\phi(\mathcal D_{r(k)})|W_k] ] = \mathbb E[c_2^{W_k}].
\end{equation}
Since $W_k$ is a random walk process with positive bias at least $p$. Let $V_k= Bin(k,p)$ be a binomial random variable. Then for any $n\ge 0$ we have $P(W_k\ge n) \ge P(2V_k-k\ge n)$. Therefore, by interchanging the sum order and using the Moment Generating Function for Binomial random variable, we have
\begin{equation}
\mathbb E[c_2^{W_k}] \le \mathbb E [c_2^{2V_k}]   =  \left(\frac{1-p+c_2^2p}{c_2}\right)^k = c_3^k
\end{equation}
as desired where $c_3=\dfrac{1-p+c_2^2p}{c_2}<1$.
\end{proof}
Let $Q(t)$ denote the label complexity at time $t$. Define $S(t)=(\frac 1 {c_3})^{\frac {Q(t)}{2M}}-(\frac {1} {c_3})[(\frac 1 {c_3})^{\frac 1 {2M}}-1]t$. Next we show that $S(t)$ is a supermartingale.  Since $Q(t)\le 2(k_t+1)M$, we have
\begin{align}
\Pr \left(Q(t+1)=Q(t)+1|S(1),S(2),\cdots,S(t)\right) & = \Pr (q_{t+1}=1|S(1),S(2),\cdots,S(t)) \nonumber \\
& = \mathbb E[\phi(\mathcal D_{r(k_t)})|Q(t)] \nonumber \\
& \le  c_3^{ \frac{Q(t)}{2M}-1}.
\end{align}
Therefore,
\begin{equation}
\begin{aligned}
&\mathbb E[S_{t+1}|S(1),S(2),\cdots,S(t)]\\
=& \ \mathbb{E} \left[ \left(\frac 1 {c_3}\right)^{\frac{Q(t)}{2M}}-\left(\frac 1 {c_3}\right)\left(\left(\frac 1 {c_3}\right)^{\frac 1{2M}}-1\right)t \bigg|S(1),S(2),\cdots,S(t)\right]\\
\le& \  \left(\frac 1 {c_3}\right)^{\frac{Q(t)+1}{2M}}\left(\frac{1+c}2\right)^{\frac{Q(t)}{2M}-1} +  \left(\frac 1 {c_3}\right)^{\frac{Q(t)}{2M}}\left(1-\left(\frac{1+c}2\right)^{\frac{Q(1)}{2M}-1}]\right)\\
& \ - \left(\frac 1 {c_3}\right)\left[\left(\frac 1 {c_3}\right)^{\frac 1 {2M}}-1\right](t+1)\\
=& \ \left(\frac 1 {c_3}\right)^{\frac {Q(t)}{2M}}-\left(\frac 1 {c_3}\right)\left[\left(\frac 1 {c_3}\right)^{\frac 1{2M}}-1\right]t=S(t)
\end{aligned}
\end{equation}
as desired. Then by optional stopping theorem,
\begin{equation}
\begin{aligned}
&\mathbb E\left[\left(\frac 1 {c_3}\right)^{\frac {Q(t)}{2M}}-\left(\frac 1 {c_3}\right)\left(\left(\frac 1 {c_3}\right)^{\frac 1{2M}}-1\right)t\right]\\
=& \ \mathbb E[S(T)]\le \ \mathbb E[S(0)] = 1.
\end{aligned}
\end{equation}
Hence,
\begin{equation}
\mathbb E\left[\left(\frac 1 {c_3}\right)^{\frac {Q(T)}{2M}}\right]\le \left(\frac 1 {c_3}\right)\left(\left(\frac 1 {c_3}\right)^{\frac 1{2M}}-1\right)T+1.
\end{equation}
Since $f(x)=\log x$ is concave, by Jensen's Inequality we have,
\begin{equation}
\mathbb E[Q(T)]\le M\log_{\left(\frac 1 {c_3}\right)} \left[\left(\frac 1 {c_3}\right)\left(\left(\frac 1 {c_3}\right)^{\frac 1{2M}}-1\right)T+1\right].
\end{equation}
Since $M\le md$, $c_3<\frac 12$ and $(\frac 1 {c_3})((\frac 1 {c_3})^{\frac 1{2M}}-1)<2$, we have
\begin{equation}
\mathbb E[Q(\tau)]\le \frac{2md}{\log \frac 1{c_3}} \log (2T+1)=O(d\log T)
\end{equation}
as desired.

\section*{Appendix F: proof of Theorem~\ref{thm:reg_RW_OLA}}
\begin{lemma}
There exists $c_5>0$ such that
$ E[\phi(\mathcal D_{k+1})| \phi(\mathcal D_{k})] \ge c_5 \phi(\mathcal D_{k})$.
\end{lemma}
\begin{proof}
Let
\begin{equation}
c'=8 \theta' c_0 m^{-\frac 1 2} 
\end{equation}
and
\begin{equation}
\mathcal V_k^{\theta'}= \left\{h\in\mathcal H_k,\rho(h,h^*)\le\frac{c'\phi(\mathcal D_k)}{\theta'} \right\}.
\end{equation}
By the definition of $\Delta(M,1-\sqrt p)$ we can show that 
\begin{equation}
\frac{8}{\sqrt{m }}\le 4\Delta(M,1-\sqrt p).
\end{equation}
Here we assume that there exists a constant $c_0'$ such that 
$\displaystyle c_0'(d(h, h^*)) \leq \rho(h, h^*)$ holds for all $h \in \mathcal{H}$. Note that the worst case we have $c_0'=1$. If $h\in\mathcal V_k^{\theta'}$, then
\begin{equation}
\epsilon_{\mathbb P|{\mathcal D_k}}(h)-\epsilon_{\mathbb P|{\mathcal D_k}}(h^*)= \frac{\epsilon_{\mathbb P}(h)-\epsilon_{\mathbb P}(h^*)}{\phi(\mathcal D_k)}\le\frac{\rho(h,h^*)}{c_0'\phi(\mathcal D_k)}.
\end{equation}
Using the above equation, we can write,
\begin{equation}
 \frac{\rho(h,h^*)}{c_0'} \frac{1}{\phi(\mathcal D_k)}= \frac{8\phi(\mathcal D_k)}{\sqrt{m}}\frac{1}{\phi(\mathcal D_k)} \le\frac{8}{\sqrt{m }}.
\end{equation}
 Thus by Lemma~\ref{lemma:vc}, we can conclude that
\begin{align*}
	\epsilon_{\cZ_k}(h) - \epsilon_{\cZ_k}(h_k^*) & \le \epsilon_{\cZ_k}(h) - \epsilon_{\cZ_k}(h^*) \\
& \le \epsilon_{\mathbb P|\cD_k}(h) - \epsilon_{\mathbb P|\cD_k}(h^*) +2\Delta(M,1-\sqrt p) \\
	& \le \frac{8}{\sqrt{m }}  + 2\Delta(M,1-\sqrt p)  \\
	& \le 6\Delta(M,1-\sqrt p)  
\end{align*}
with probability $p$. This indicates that for all $h\in\mathcal V_k^{\theta'} $, $h\in \mathcal H_{k+1}$ when zoomed in. By the definition of $\theta$, we have $\phi(\mathcal D_{k+1})\ge\phi(\Psi(\mathcal V_k^{\theta'}))\ge\dfrac{c'\phi(\mathcal D_k)}{\theta'}\cdot \theta'=c'\phi(\mathcal D_k)$ with probability at least $p$. Therefore $ E[\phi(\mathcal D_{k+1})| \phi(\mathcal D_{k})] \ge c_5 \phi(\mathcal D_{k})$ as desired, where $c_5 = c'p$.
\end{proof}

\begin{definition}
Define a random process $U_k$ for each epoch $k$ as following: 
\begin{enumerate}
\item Let $U_1 = 0$
\item If $d(\mathcal H_k)\ge 0$:
let $U_{k+1}=U_k$
\item If $d(\mathcal H_k) <0$:
\begin{enumerate}
\item If the verification passed, let $U_{k+1}=U_k+1$
\item If the verification failed, let $U_{k+1} = U_k-1$
\end{enumerate}
\end{enumerate}
\end{definition}

\begin{lemma}

For all epoch $k$, we have
$$
\Pr(U_{k+1} = U_k + 1) \le 1-p
$$
\end{lemma}

\begin{proof}
Based on how $U_{k+1}$ is defined, we consider the case where $d(\mathcal H_k) <0$. It suffices to show that the probability that the algorithm zooms out is at least $p$. By definition, we have
\begin{equation}
\min_{h\notin \mathcal H_k} \epsilon_{\mathbb P_{\mathcal D_{r(k)}}}(h) < \min_{h\in \mathcal H_k} \epsilon_{\mathbb P_{\mathcal D_{r(k)}}}(h)
\end{equation}
Therefore, by applying Lemma~\ref{lemma:vc}, with probability at least $p$, we have
\begin{equation}
\min_{h\notin\mathcal H_k} \epsilon_{\mathcal Z_k'} (h) \le \min_{h\notin \mathcal H_k} \epsilon_{\mathbb P_{\mathcal D_{r(k)}}}(h) + \Delta(M,1-\sqrt p)
\end{equation}
and
\begin{equation}
\min_{h\in\mathcal H_k} \epsilon_{\mathcal Z_k'} (h) \ge \min_{h\in \mathcal H_k} \epsilon_{\mathbb P_{\mathcal D_{r(k)}}}(h) - \Delta(M,1-\sqrt p).
\end{equation}
Hence,
\begin{equation}
\min_{h\notin\mathcal H_k} \epsilon_{\mathcal Z_k'} (h) - \min_{h\in\mathcal H_k} \epsilon_{\mathcal Z_k'} (h) \le  2\Delta(M,1-\sqrt p).
\end{equation}
Therefore, the probability that the algorithm zooms out is at least $p$.
\end{proof}

\begin{lemma}
\label{lemma:8}
Let $R_k$ be the regret at epoch $k$ and $Q_k$. Then, there exists $c_6>0$ such that
\begin{equation}
\mathbb E[R_k|U_k] \le c_6^{U_K}\cdot 2M
\end{equation}
\end{lemma}
\begin{proof}
First, note that when $h^*\in\mathcal H_k$, there is no regret at epoch $k$. When $h^*\notin H_k$, and let $r'<k$ be the last time such that $h^*\in H_{r'}$, then, at epoch $k$ we have
\begin{equation}
\Pr(r_t=1) \le \Pr(X_t\notin \mathcal D_k)={\phi(\mathcal D_{r'})}- \phi(\mathcal D_k).
\end{equation}
By definition 2, number of times the algorithm zooms in from epoch $r'$ to epoch $k$ is $U_k$, by lemma 4 we have 
\begin{equation}
\Pr(r_t=1|U_k) \le  {\phi(\mathcal D_{r'})}- \phi(\mathcal D_k)  \le \frac{1-c_5^{U_K}}{c_5^{U_k}} {\phi(\mathcal D_{r'})} \le (1/c_5)^{U_k}  \phi(\mathcal D_k).
\end{equation}
Note that the number of queried labels at epoch $k$ is at most $2M$. Therefore,
\begin{equation}
\mathbb E[R_k|U_k] =(1/c_5)^{U_k}\cdot 2M
\end{equation}
as desired for $c_6 = 1/c_5$.
\end{proof}

\begin{lemma}

If $\displaystyle p>\frac 12- \frac 1 {4c}+\sqrt{\frac{1-c}{4c}}$ where $c=32 \theta c_0 m^{-\frac 1 2}$, there exists $c_7>0$ such that
$$
\mathbb E[R_k] \le c_7 M.
$$
\end{lemma}

\begin{proof}
By tower property and similarly as Lemma~\ref{lemma:tower}, we have,
\begin{equation}
\mathbb E[R_k] = \mathbb E[\mathbb E[R_k|U_k] ] = \mathbb E[c_6^{U_k}].
\end{equation}
Since $U_k$ is a random walk process with negative bias at least $p$. By interchanging the sum order and using the Moment Generating Function for Binomial random variable, we have
\begin{equation}
\mathbb E[c_2^{U_k}] \le = \left(\frac{1-p^2}{p^2}\right)^k
\end{equation}
as desired.
\end{proof}

By Theorem 2 we have $\mathbb E[Q(T)]\le  C md \log T$. By Lemma~\ref{lemma:8} we have 
\begin{equation}
\mathbb E[R(T)] \le \mathbb E\left[\sum_{k=1}^{k_t}R_k\right] \le c_7\mathbb E\left[\sum_{k=1}^{k_t}Q_k\right] \le c_7 \mathbb E[Q(T)] \le c_7 Cmd\log T
\end{equation}
 as desired.

\bibliography{OLA}
\bibliographystyle{IEEEtran}

\end{document}